\newtheorem{thm}{Theorem}
\newtheorem{asm}{Assumption}
\newtheorem{lem}{Lemma}
\begin{document}

\begin{frontmatter}

\title{Theoretical Properties of the Overlapping
  Groups Lasso}
\runtitle{Theory of Overlapping Group Lasso}

\begin{aug}
\author{\fnms{Daniel} \snm{Percival}\thanksref{t1,t2,t3}\ead[label=e1]{dperciva@andrew.cmu.edu}}
\address{Carnegie Mellon University\\
Department of Statistics\\
Pittsburgh, PA 15213 USA\\
\printead{e1}\\
}
\runauthor{Percival}

\thankstext{t1}{This work is part of the author's Ph.D thesis.}
\thankstext{t2}{This work was funded by the National Institutes of Health grant
MH057881 and National Science Foundation grant DMS-0943577.  The author was also partially supported by National Institutes of Health SBIR grant 7R44GM074313-04 at Insilicos LLC.}
\thankstext{t3}{The author would like to thank Larry Wasserman and Aarti Singh for helpful comments and discussions, as well as the two anonymous referees whose suggestions helped greatly improve the contents of this paper.}

\affiliation{Carnegie Mellon University, Department of Statistics}

\end{aug}

\begin{abstract}
We present two sets of theoretical results on the
grouped lasso with overlap due to~\cite{Laurent-graph-lasso} in the linear
regression setting. This method jointly selects
predictors in sparse regression, allowing for complex
structured sparsity over the predictors encoded as a set of
groups. This flexible framework suggests that arbitrarily complex structures
can be encoded with an intricate set of groups. Our results show that
this strategy results in unexpected theoretical consequences for
the procedure. In particular, we
give two sets of results: (1) finite sample bounds on prediction and
estimation, and (2) asymptotic distribution and selection. Both sets
of results demonstrate negative consequences from choosing an
increasingly complex set of groups for the procedure, as well for when the set of groups cannot recover the true sparsity
pattern. Additionally, these results demonstrate the differences and
similarities between the the grouped lasso procedure with and without
overlapping groups. Our analysis shows that while the procedure enjoys advantages over the standard lasso, the set of groups must be
chosen with caution --- an overly complex set of groups will damage the
analysis.

\end{abstract}


\begin{keyword}
\kwd{Sparsity}
\kwd{Variable Selection}
\kwd{Structured Sparsity}
\kwd{Regularized Methods}
\end{keyword}

\end{frontmatter}

\section{Introduction}

In this paper, we consider the linear regression model: $\textbf{y} =
\textbf{X}\beta^0 + {\epsilon}$, where $\textbf{X}$ is an
$n \times p$ real valued data matrix, $\textbf{y} \in \mathbb{R}^n$ is
a vector of responses, $\beta^0 \in \mathbb{R}^p$ is a vector of
linear weights, and ${\epsilon}$ is an error vector.  Much
work focuses on estimating a {\em sparse} $\widehat{\beta}$, where
many of the entries are equal to zero, effectively excluding many of
the dimensions of $\textbf{X}$ --- the candidate predictors --- from
the model. Recent work adds the notion of {\em   structure} to this
setting.  That is, we desire the set of nonzero entries in
$\widehat{\beta}$ to follow some predefined structure over the
candidate predictors. There are now many methods tailored to a diverse
collection of structures, including hierarchical structures, group
structures, and graph derived structures: see~\cite{Bach_levelsets, Bach10hierarchical,
  Bach10structuredsparse, Huang:2009:LSS:1553374.1553429, Jenatton_Audibert_Bach_2009,
  Jenatton09structuredsparse, pengMasterPredictors,
  percivalStructureSparse, xingTreeGuided, zhoaCAP} for examples. 

One such structured sparse method is the grouped lasso of~\cite{Yuan06modelselection}, which extends
the familiar $\ell_1$ penalization to a grouped $\ell_1$ norm. In
particular, the grouped lasso allows for groups of predictors to enter
the model together, a useful property in settings such as ANOVA or multi-task regression.  For example, we can encode
 a factor predictor with $m$ levels as $m-1$
 indicator variables in $\textbf{X}$. When we build a sparse
 regression model, we might prefer to select none or all 
 of this group of $m-1$ variables, 
but not any other subset.  The grouped lasso enables this type of grouped selection.
\cite{bach-group-lasso, chesnut-grouped, zhang_benefits, Lounici_takingadvantage,
  nardi-grouped-lasso} give theoretical results for this procedure, including oracle inequalities and asymptotic distributions. In particular, they showed that for some problems the grouped lasso outperforms the ordinary lasso.

However,
the grouped $\ell_1$ norm of the grouped lasso is limited in that it
only allows groups that partition the set of candidate
predictors. This restricts the complexity and types of structures over
the candidate predictors that can be encoded in the groups.  For
example, we could represent the potential structures of $\beta^0$ as a a graph over $p$ nodes, where each node represents
a candidate predictor.  We might then seek to build a sparse model where the selected predictors
correspond to a subgraph, such as a neighborhood or clique, of this graph. This structure can be encoded, for example, as a series of overlapping neighborhoods, such as 4-cycles in a 2-dimensional lattice graph. The grouped $\ell_1$ norm does not allow for such a set of groups.

The grouped lasso with overlap of~\cite{Laurent-graph-lasso} is one solution to this
problem (see also the CAP penalty of~\cite{zhoaCAP}, as well as other
group based procedures in~\cite{Bach10structuredsparse,
  Jenatton_Audibert_Bach_2009}). Using an extension of the 
grouped norm of the grouped lasso, this procedure allows for complex,
overlapping group structures.  Given a collection of subsets of the
set of candidate predictors, the procedure recovers nonzero patterns
equal to a union of some subset of this collection.  This property can encode many complex
structures over the candidate predictors, and thus within the
resulting sparsity patterns of the estimated
coefficients. While~\cite{Laurent-graph-lasso} gave some initial
theoretical results on this procedure, including a consistency result,
many theoretical questions were left open.  In particular, the impact
on the predictive and estimation  performance of the procedure of
increasingly complex sets of groups remained unanswered. The overlapping nature of the groups allows the
possibility for an arbitrarily large set of  groups to encode complex
structures, or many possible structures simultaneously. If we suppose
that there is no consequence to increasing the number and complexity
of the groups, then we can freely run the procedure under many
structural conditions simultaneously. 

The concluding remarks of~\cite{zhang_benefits} indicate that the grouped lasso does not perform well with overlapping groups. The goal of this paper is to expose exactly how introducing the possibility of overlapping groups impacts the grouped lasso. Towards this goal, we demonstrate some theoretical properties of the
overlapping grouped lasso, with a focus on the consequence of the
number and complexity of the groups of predictors.  We give a finite sample and an asymptotic
result.  In particular, we make the following contributions:
\begin{enumerate}
\item We show that both the finite sample and asymptotic performance
  of the overlapping grouped lasso suffers as the number and complexity of the groups grows.
\item In the finite sample case, we show that the 
  assumptions on the design matrix $\textbf{X}$ become more
  restrictive as the   complexity of the groups grows.
\item In our asymptotic analysis, we introduce the adaptive overlapping grouped
  lasso, and give an   adaptive weighting scheme with asymptotic
  selection guarantees   similar to the adaptive lasso of~\cite{Zou_2006}
  (see also the adaptive   grouped lasso results in~\cite{nardi-grouped-lasso}). 
\end{enumerate}

Overall, we conclude that the overlapping grouped lasso enjoys many of
the same theoretical guarantees as the grouped lasso, provided that
the set of groups are not too complex or large.  We therefore recommend
that the procedure should be used  with a set of groups that is not
overly complex, or contains a nested structure.

The paper is organized as follows: we first introduce notation
for the overlapping grouped lasso.  We also reproduce some basic theoretical
properties of the procedure and the associated overlapping grouped norm.
We next give our finite sample results, and then our asymptotic
results.  We then present a simulation study to support our theoretical results. Proofs of the main results along with supporting lemmas appear in the appendix. 

\section{Notation}

We adopt a combination of the notation
of~\cite{Laurent-graph-lasso} 
and~\cite{Lounici_takingadvantage}. Recall our basic setting, the linear model: 
\begin{align}
\label{mod}
\textbf{y} = \textbf{X}\beta^0 + \epsilon.
\end{align}
Here, $\textbf{X}$ is an $n \times p$ data matrix, $\textbf{y} \in
\mathbb{R}^n$ is the  response, $\beta^0 \in \mathbb{R}^p$ is a
vector of true linear coefficients, and $\epsilon$ is a stochastic
error term.  Our goal is to estimate a sparse $\widehat{\beta}$, such
that the nonzero entries follow some structure which we assume to be
known a priori. In particular, we consider structures defined in terms
of groups of predictors, which we define as subsets of the set of  candidate predictors indices:
$\mathcal{I} = \{1,2,\ldots,p\}$.  We denote a collection of groups as
$\mathcal{G}$ with elements $g$ such that each $g \subseteq \mathcal{I}$.  Let $|\mathcal{G}| =
M$, and assume $\displaystyle\bigcup_{g \in   \mathcal{G}} g = \mathcal{I}$.  For
coefficient vectors $\beta$, we define $\beta_g \in \mathbb{R}^{|g|}$
as the sub vector consisting of the entires corresponding to the
indices in $g$.  Define the support of a vector as: 
\begin{align}
\mbox{supp}(\beta) = \{i: \beta_i \neq 0\} \subseteq \mathcal{I}.   
\end{align}

We now give a
framework, proposed by~\cite{Laurent-graph-lasso}, to measure the structured sparsity of vectors in
$\mathbb{R}^p$.  We define the following
convention: for vectors denoted $v_g \in \mathbb{R}^p$ we have that 
$\mbox{supp}(v_g) \subseteq g$.  We define a
decomposition of $\beta \in \mathbb{R}^p$ with respect to
$\mathcal{G}$ as:
\begin{align}
\label{theDecomposer}
 \mathcal{V}_\mathcal{G}(\beta) = \left\{ v_g: g \in\mathcal{G} \right\} \mbox{
   such that } \sum_{g \in \mathcal{G}} v_g = \beta .
\end{align}
That is,  each decomposition in $\mathcal{V}_\mathcal{G}(\beta)$ is a
collection of $M$
vectors in $\mathbb{R}^p$ each satisfying $\mbox{supp}(v_g) \subseteq
g$ for a different $g \in \mathcal{G}$. From now on, we suppress
the $\mathcal{G}$ in the notation for decompositions and write $\mathcal{V}(\beta)$.  
$\mathcal{V}(\beta)$ is not unique in general.  We define the following
norms: 
\begin{align}
\label{norm}
||\beta||_{2,p,\mathcal{G}} &= \min_{\mathcal{V}(\beta)}\left( \sum_{g
    \in \mathcal{G}} \left( \sum_{i \in g} v_{g,i}^2 \right)^{p/2}
\right)^{1/p},\\ 
&= \min_{\mathcal{V}(\beta)}\left( \sum_{g \in \mathcal{G}} ||v_g||^p
\right)^{1/p}; \label{overlapNorm}\\
||\beta||_{2,\infty,\mathcal{G}} &= \min_{\mathcal{V}(\beta)} \max_{g \in
  \mathcal{G}}  ||v_g||. 
\end{align}
Here, $||\cdot||$ denotes the Euclidean or $\ell_2$ norm.  The above
two equations are norms by arguments presented
in~\cite{Laurent-graph-lasso}.  Note that the notation
$\min_{\mathcal{V}(\beta)}$ indicates the minimum over all possible
decompositions.  Note that the decomposition that minimizes
these norms is not necessarily unique, as we state in the following
lemma.

\begin{lem}
\label{lemmaForUniqueNess}
(Corollary 1 from~\cite{Laurent-graph-lasso}) For any collections $\{v_g\}$, $\{v_g'\}$ minimizing the
norm~\ref{overlapNorm}, we have, $\forall g \in \mathcal{G}$:
\begin{align}
||v_g|| \times ||v_g'|| = 0 \mbox{ or } \frac{v_g}{||v_g||} = \frac{v_g'}{||v_g'||}.
\end{align}
\end{lem}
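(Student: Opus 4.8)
The plan is to exploit the convexity of the minimization defining the norm in~\eqref{overlapNorm} together with the rigidity of the equality case in the Euclidean triangle inequality. Write $F(\{v_g\}) = \left(\sum_{g \in \mathcal{G}} ||v_g||^p\right)^{1/p}$ for the objective in~\eqref{overlapNorm}; since this is a norm on the product space $\prod_{g} \mathbb{R}^{|g|}$ it is convex, and the feasible set $\{\{v_g\} : \sum_g v_g = \beta,\ \mathrm{supp}(v_g) \subseteq g\}$ is an affine subspace, hence convex. Let $\{v_g\}$ and $\{v_g'\}$ both attain the minimum value $F^\ast = ||\beta||_{2,p,\mathcal{G}}$, and consider the midpoint decomposition $w_g = (v_g + v_g')/2$. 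Because $\sum_g w_g = \beta$ and $\mathrm{supp}(w_g) \subseteq g$, the collection $\{w_g\}$ is itself a decomposition in $\mathcal{V}(\beta)$, so $F(\{w_g\}) \geq F^\ast$.

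Next I would bound $F(\{w_g\})$ from above by a termwise triangle inequality followed by Minkowski's inequality. Setting $a_g = ||w_g||$ and $b_g = \tfrac{1}{2}(||v_g|| + ||v_g'||)$, the triangle inequality for $||\cdot||$ gives $a_g \le b_g$ for every $g$, so monotonicity of the $\ell_p$ norm on the nonnegative orthant yields $F(\{w_g\}) = (\sum_g a_g^p)^{1/p} \le (\sum_g b_g^p)^{1/p}$, while Minkowski's inequality gives $(\sum_g b_g^p)^{1/p} \le \tfrac12 F(\{v_g\}) + \tfrac12 F(\{v_g'\}) = F^\ast$. Combining with the lower bound above squeezes the whole chain to equalities; in particular $\sum_g a_g^p = \sum_g b_g^p$ with $a_g \le b_g$ for each $g$, and since $t \mapsto t^p$ is strictly increasing on $[0,\infty)$ this forces $a_g = b_g$, i.e.\ $||v_g + v_g'|| = ||v_g|| + ||v_g'||$ for every $g \in \mathcal{G}$.

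Finally I would invoke the equality condition of the Euclidean triangle inequality: $||u + w|| = ||u|| + ||w||$ holds if and only if one of $u,w$ is a nonnegative scalar multiple of the other. Applied to $u = v_g$ and $w = v_g'$, this means that for each $g$ either at least one of $v_g, v_g'$ vanishes --- equivalently $||v_g|| \times ||v_g'|| = 0$ --- or both are nonzero and positively proportional, in which case $v_g/||v_g|| = v_g'/||v_g'||$. This is exactly the claimed dichotomy.

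The only genuinely delicate points are the equality case of the triangle inequality (where I must rule out negative proportionality, which the equality $||v_g + v_g'|| = ||v_g|| + ||v_g'||$ precludes) and the bookkeeping that separates the ``one vector is zero'' branch from the ``equal unit directions'' branch; the convexity and squeezing steps are routine. I also note that for $p > 1$ the equivalent objective $\sum_g ||v_g||^p$ is in fact strictly convex --- each summand $||v_g||^p$ is strictly convex in $v_g$, and the blocks are disjoint --- so the minimizing decomposition is unique and the dichotomy holds trivially with $v_g = v_g'$. The substantive content of the lemma is therefore the $p = 1$ case, where the argument above supplies the directional alignment despite the lack of uniqueness.
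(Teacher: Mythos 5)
Your proof is correct. The paper gives no proof of this lemma --- it is imported verbatim as Corollary 1 of \cite{Laurent-graph-lasso} --- and your argument (feasibility of the midpoint decomposition, squeezing it between the optimal value and the termwise triangle inequality plus Minkowski, then invoking the equality case of the Euclidean triangle inequality to rule out everything but a vanishing factor or positive proportionality) is essentially the standard one used there; your added observation that strict convexity of $\sum_g \|v_g\|^p$ makes the $p>1$ case trivial, so that only $p=1$ carries content, is also accurate.
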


The above lemma implies that in some cases the collection of groups used in the decomposition --- that is, $\{g \in \mathcal{G} \mbox{ s.t. } v_g \neq 0$\} --- is not unique.

Finally, for $J \subset \mathcal{G}$ we write $\beta_{J} = \sum_{g \in G}
v_g 1_{g \in J}$, note that $J \subseteq \{1,2,\ldots,M\}$.  Let $J_v(\beta) = \{g: v_g \neq 0\}$, and $M_v(\beta) =
|J_v(\beta)|$. Thus, $J_v(\beta)$ is the set of groups used to
decompose $\beta$ for a particular decomposition. $M_v(\beta)$ is thus
a measure of the structured sparsity of $\beta$ with respect to a
particular decomposition. Let $M(\beta) = \min_v M_v(\beta)$, where this minimum is taken over  the set of decompositions
minimizing the norm~\ref{overlapNorm}.  Thus, $M(\beta)$
measures the overall structured sparsity of $\beta$, with respect to
the groups $\mathcal{G}$.

Here is a simple example to illustrate the setting.  Let $p=3$,
and consider the following groups:
\begin{align}
\mathcal{G} = \{\{1,2\}, \{2,3\} \}.
\end{align}
For any $\alpha \in \mathbb{R}$, we have the following possible
decomposition of $\beta = [a,b,c]$:
\begin{align}
\mathcal{V}(\beta) &= \{v_{\{1,2\}}, v_{\{2,3\}}\},\\
v_{\{1,2\}} &= [a,\alpha b, 0],\\
v_{\{2,3\}} &= [0,(1-\alpha) b, c].
\end{align}
Thus, the norm from Equation~\ref{overlapNorm} can be expressed as:
\begin{align}
||\beta||_{2,1,\mathcal{G}} = \min_\alpha \left( \sqrt{a^2 + (\alpha b
 )^2} + \sqrt{ ((1-\alpha) b)^2 + c^2} \right).
\end{align}
Finally, it is clear that for $a, c \neq 0$, $M(\beta) = 2$, and $M(\beta) = 1$ otherwise.

\section{Overlapping Grouped Lasso}

Recall our goal, under the model of Equation~\ref{mod}, we estimate the target $\beta^0$ with a sparse $\widehat{\beta}$ -- that is, many entires of
$\widehat{\beta}$ are set to zero.  Additionally, we know these nonzero entries
occur in a structured pattern, as given by $\mathcal{G}$.  We evaluate the fit with the usual quadratic loss:
\begin{align}
\ell(\beta) &=  \frac{1}{n} ||\textbf{y} - \textbf{X}\beta||^2.
\end{align}
The overlapping grouped lasso solves the following optimization problem:
\begin{align}
\label{opt}
\widehat{\beta}  = \underset{\boldsymbol{\beta} \in
  \mathbb{R}^p}{\operatorname{argmin}} \left( \ell(\beta) + 2\lambda||\beta||_{2,1,\mathcal{G}}\right).
\end{align}
Here $\lambda >0$ is a tuning parameter controlling the amount of regularization. If the elements of $\mathcal{G}$ are restricted to be pairwise disjoint, then the norm $||\cdot||_{2,1,\mathcal{G}}$ reduces to the grouped $\ell_1$ norm. We then
recover the original formulation of the grouped lasso.  In the special
case where the groups are all singletons: $\mathcal{G}
= \{\{i\}: i \in \mathcal{I}\}$, we recover the familiar lasso~\cite{tibshiraniLasso}.  If we allow
$\mathcal{G}$ to be any collection, allowing for the possibility of
overlap between groups, then the minimum over
$\mathcal{V}(\beta)$ in the norm now plays a role since the decomposition of
$\beta$ is no longer unique in general.  This setting gives us the overlapping grouped
lasso.  For each of these problems, the key fact is that the support
of $\widehat{\beta}$ will be a union of members of a subset of $\mathcal{G}$.
Finally, we also introduce the the adaptive overlapping
grouped lasso:  
\begin{align}
\label{adopt}
\widehat{\beta}  = \underset{\boldsymbol{\beta} \in
  \mathbb{R}^p}{\operatorname{argmin}} \left(  {\ell}(\beta) + 2\lambda \min_{\mathcal{V}(\beta)}\sum_{g \in \mathcal{G}} \lambda_g||v_g|| \right).
\end{align}
As previous work and theory has suggested (\cite{nardi-grouped-lasso, Zou_2006}), the choice of
weights: $\lambda_g = 1/||\beta^{{OLS}}_g||^\gamma$, where
$\beta^{OLS} =
(\textbf{X}^T\textbf{X})^{-1}\textbf{X}^{T}\textbf{y}$, and $\gamma>0$, gives good asymptotic guarantees.  In Section~\ref{asymSection}, we
 show that a different choice is needed in our setting to give similar asymptotic guarantees.



Finally, as noted by~\cite{Laurent-graph-lasso}, the
overlapping grouped lasso method is simple to implement.
In the case where $\mathcal{G}$ consists of non-overlapping groups,
there are several efficient algorithms available.  In the overlapping
case, no new specialized algorithm is required.  Write $\textbf{X}_g$ as
the sub-matrix of $\textbf{X}$ with only the   columns of $\textbf{X}$
indexed by the elements of $g$. Now define $\widetilde{\textbf{X}} =
[\textbf{X}_g]_{g \in   \mathcal{G}}$ --- a $n \times \sum_g|g|$
matrix of the concatenation of the columns of $\textbf{X}$
corresponding to each group in $\mathcal{G}$.  We then can solve the
optimization problem with with a new, non-overlapping, set of groups $\mathcal{G}$ defined on the
appropriate columns of $\widetilde{\textbf{X}}$.  Since $\mathcal{G}$ is now a
non-overlapping set of groups for $\widetilde{\textbf{X}}$, we can simply apply existing algorithms for the grouped lasso.

\section{Finite Sample Bounds}

We now give a sparsity oracle inequality for the overlapping grouped
lasso.  This finite sample result is an extension of a result on
multitask regression due to~\cite{Lounici_takingadvantage}, which is
in turn  built on results
from~\cite{Bickel_simultaneousanalysis}.  We first state and discuss our main assumption, which is an adaptation of the
restricted eigenvalue condition of~\cite{Bickel_simultaneousanalysis} to the overlapping grouped
lasso seetting.


\begin{asm}
\label{asmREC}
Suppose $1 \leq s \leq M = |\mathcal{G}|$.  Then there
exists $\kappa(s) > 0$ such that:
\begin{align}
\kappa(s) &\leq \min \left\{ \frac{\sqrt{\Delta^TX^TX\Delta}}{\sqrt{n} \sum_{g \in
      J}||v_g^\Delta||}:  J \subseteq \mathcal{G}; J \in
  \mathcal{J}(s) \right\},\\
\mathcal{J}(s) &:=  \left\{J \subseteq \mathcal{G}; |J| \leq s; \Delta \in \mathbb{R}^p
  \backslash \textbf{0};  \ \mathcal{V}(\Delta) = \{v_g^\Delta\}  \mbox{  s.t. } \sum_{g \in J^c}||v_g^\Delta|| \leq 3 \sum_{g \in J}||v_g^\Delta||\right\}.
\end{align}
Here $J^c = \{g: g\in \mathcal{G}, g \notin J\}$, and $\mathcal{V}(\Delta) = \{v_g^\Delta\}$  denotes the decomposition minimizing the norm $||\Delta||_{2,1,\mathcal{G}}$.
\end{asm}

In the subsequent results, the integer $s$ measures the structured sparsity of the target.  There are
two key differences between this assumption and other restricted
eigenvalue conditions.  First, it relies on norms of the {\em decompositions}
 of vectors, rather than norms of the
vector or appropriate sub-vectors.  Note that the decomposition of $\Delta$ must
be a decomposition minimizing the $||\cdot||_{2,1,\mathcal{G}}$ norm.
As we will discuss later, this condition grows more restrictive as
 $\mathcal{G}$ becomes more complex. The key second difference in the assumption lies in the denominator
  term $\sum_{g \in    J}||v_g^\Delta||$, which appears instead of the
  directly analogous $|| \sum_{g \in J} v_g^\Delta ||$. We know by the
  triangle inequality that $|| \sum_{g \in J} v_g^\Delta || \leq
  \sum_{g \in   J}||v_g^\Delta||$, and so our $\kappa$ is less than or
  equal to a $\kappa'$ obtained under the  analogous assumption.  In the case of non-overlapping groups, this is an equality, and the assumption is identical in this case.
  
We now examine some sufficient conditions for the existence of $\kappa(s)$. Examining the numerator of the main quantity defining $\kappa(s)$, we see that $\sqrt{\Delta^TX^TX\Delta/n} \geq |\rho_X|^{1/2}||\Delta||$, where $\rho_X$ is the minimal eigenvalue of $X^TX/n$. Examining the denominator, we can make the following bounds:
\begin{align}
\sum_{g\in J} ||v_g^\Delta|| &\leq \sum_{g \in \mathcal{G}} ||v_g^\Delta||\\
&\leq \sum_{g \in \mathcal{G}} ||\Delta_g||\\
&\leq ||\Delta||(M\mathcal{G}_{\mbox{overlap}})^{1/2}.
\end{align}
Here, $\mathcal{G}_{\mbox{overlap}} := \max_{j \in \mathcal{I}} \left[\sum_{g \in \mathcal{G}} 1_{j\in g} \right]$ is the maximal number of times a candidate predictor appears in the groups of the collection $\mathcal{G}$. Thus, as long as $X^TX$ has a nonzero minimal eigenvalue, we are guaranteed to find a $\kappa(s)$ of at most $(\rho_X/M\mathcal{G}_{\mbox{overlap}})^{1/2}$. In particular, for $\kappa (s)$ to exist, it is sufficient for $X^TX$ to be positive definite. We now state our main result.


\begin{thm}
\label{finiteProp}
Consider the model in Equation~\ref{mod}.  Suppose $|\mathcal{G}| = M
\geq 2$, and $n \geq 1$.  Assume that the entries of $\epsilon$ are
i.i.d. Gaussian with mean 0 and variance $\sigma^2$.  Let $\textbf{X}$
be normalized so that the the diagonal entries of
$\textbf{X}^T\textbf{X}/n$ are all equal to 1.  Denote $M(\beta^0) \leq
s$ as the maximum number of nonzero groups in decompositions of
$\beta^0$, $\mathcal{V}(\beta^0)$. Let Assumption~\ref{asmREC} hold
with $\kappa = \kappa(s)$.  Let:
\begin{align}
\lambda = \frac{2\sigma \sqrt{\max_g|g|\mathcal{G}_{\mbox{overlap}}}}{\sqrt{n}} \left( 1 + \frac{A\log M}{\sqrt{\max_g|g|}}\right)^{1/2}.
\end{align}
Here, $A>8$.  Define $q = \min_g(\rho_g^{-2})  \min \left(A\sqrt{\min_g |g|}/8,8 \log M \right)$, where $\rho_g$ is the maximal absolute eigenvalue of a Cholesky decomposition of $\textbf{X}_g^T\textbf{X}_g$, where $\textbf{X}_g$ is the sub matrix of $\textbf{X}$ corresponding to the columns indexed by the group $g$.  Then, with probability at least
$1-M^{1-q}$, for any solution $\widehat{\beta}$ to Equation~\ref{opt}, for
all $\beta^0 \in \mathbb{R}^p$, the following inequalities hold:
\begin{align}
\label{p1res1}
\frac{1}{n}||X(\widehat{\beta} - \beta^0)||^2 &\leq
\frac{64\sigma^2 }{\kappa^2 n} \left(\max_g|g| + A\sqrt{\max_g|g|}\log M \right),\\
\label{p1res2}
||\widehat{\beta} - \beta^0 ||_{2,1,\mathcal{G}} &\leq
\frac{32\sigma }{\kappa\sqrt{n}}\left( \max_g|g| + A\sqrt{\max_g|g|}\log M
\right)^{1/2}.
\end{align}
\end{thm}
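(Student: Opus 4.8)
The plan is to follow the now-standard template for sparsity oracle inequalities, as in~\cite{Bickel_simultaneousanalysis} and its multitask adaptation~\cite{Lounici_takingadvantage}, modified to accommodate the overlapping decomposition. There are four steps: (i) a basic inequality coming from the optimality of $\widehat\beta$ in~\ref{opt}; (ii) a high-probability bound on the stochastic term that simultaneously calibrates $\lambda$ and produces the factor $1-M^{1-q}$; (iii) derivation of a cone inequality placing $\Delta = \widehat\beta-\beta^0$ in (an analogue of) the set $\mathcal{J}(s)$; and (iv) an application of Assumption~\ref{asmREC} to convert the basic inequality into the two stated bounds~\ref{p1res1} and~\ref{p1res2}.

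First I would use that $\widehat\beta$ minimizes~\ref{opt} to obtain the basic inequality
\[ \tfrac1n\|X\Delta\|^2 \le \tfrac2n\epsilon^T X\Delta + 2\lambda\left(\|\beta^0\|_{2,1,\mathcal{G}} - \|\widehat\beta\|_{2,1,\mathcal{G}}\right). \]
For the stochastic term I would fix a decomposition $\{v_g^\Delta\}$ of $\Delta$ and bound $|\epsilon^T X v_g^\Delta| \le \|X_g^T\epsilon\|\,\|v_g^\Delta\|$, so that $\tfrac2n|\epsilon^T X\Delta| \le \left(\tfrac2n\max_g\|X_g^T\epsilon\|\right)\sum_g\|v_g^\Delta\|$. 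The core probabilistic work is then to control $\max_g\|X_g^T\epsilon\|$: for each $g$, $\|X_g^T\epsilon\|^2$ is a weighted sum of $\chi^2$ variables whose weights are the eigenvalues of $X_g^T X_g$ (equivalently the squared Cholesky factors $\rho_g$), with expectation $\sigma^2$ times the trace $n|g|$ under the stated normalization. A Laurent--Massart-type tail bound, with deviation parameter of order $\log M$, followed by a union bound over the $M$ groups, yields an event of probability at least $1-M^{1-q}$ on which $\tfrac2n\max_g\|X_g^T\epsilon\|\le\lambda$. Matching the tail exponent to the union bound is exactly what fixes the stated form of $\lambda$ (through $\max_g|g|$, $\mathcal{G}_{\mbox{overlap}}$, and $\log M$) and of $q$ (through $\rho_g$).

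On this event the basic inequality gives $\tfrac1n\|X\Delta\|^2 \le \lambda\sum_g\|v_g^\Delta\| + 2\lambda(\|\beta^0\|_{2,1,\mathcal{G}}-\|\widehat\beta\|_{2,1,\mathcal{G}})$. To produce the cone inequality I would let $\{v_g^0\}$ be a norm-minimizing decomposition of $\beta^0$, supported on a group set $J$ with $|J|\le s$, and let $\{w_g\}$ minimize the norm of $\widehat\beta$; then $\{w_g - v_g^0\}$ is a decomposition of $\Delta$, and splitting the group index set at $J$ together with the reverse triangle inequality gives $\|\widehat\beta\|_{2,1,\mathcal{G}} - \|\beta^0\|_{2,1,\mathcal{G}} \ge \sum_{g\notin J}\|w_g-v_g^0\| - \sum_{g\in J}\|w_g-v_g^0\|$. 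Discarding the nonnegative prediction term and using $\|\Delta\|_{2,1,\mathcal{G}}=\sum_g\|v_g^\Delta\|\le\sum_g\|w_g-v_g^0\|$ then rearranges into the $\ell_1$-cone inequality $\sum_{g\in J^c}\|w_g-v_g^0\| \le 3\sum_{g\in J}\|w_g-v_g^0\|$. After transferring this to the minimizing-decomposition formulation of $\mathcal{J}(s)$ (the crux, discussed next), Assumption~\ref{asmREC} yields $\sqrt{\Delta^TX^TX\Delta/n}\ge\kappa\sum_{g\in J}\|v_g^\Delta\|$; combined with the refined inequality $\tfrac1n\|X\Delta\|^2 \le 3\lambda\sum_{g\in J}\|v_g^\Delta\|$, this is a quadratic inequality in $\|X\Delta\|$ that I would solve to get~\ref{p1res1}, after which the cone inequality (bounding the whole norm by a multiple of its restriction to $J$) and one more use of $\kappa$ deliver~\ref{p1res2}.

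The step I expect to be the main obstacle is reconciling the two different decompositions forced by the overlap. The cone inequality emerges naturally for the decomposition $\{w_g-v_g^0\}$ built from the minimizers of $\widehat\beta$ and $\beta^0$, whereas Assumption~\ref{asmREC} and the target quantities are phrased in terms of the norm-minimizing decomposition $\{v_g^\Delta\}$ of $\Delta$ itself, and for overlapping groups these need not coincide. In the disjoint (multitask) case of~\cite{Lounici_takingadvantage} all decompositions are unique and this difficulty is absent. To close the gap I would invoke Lemma~\ref{lemmaForUniqueNess}, which forces the nonzero blocks of any two norm-minimizing decompositions to be positive multiples of one another, and use it to transfer both the cone inequality and the sum $\sum_{g\in J}\|\cdot\|$ between the two decompositions up to controlled constants; verifying that this transfer does not degrade the rate is the delicate part of the argument, and is what ultimately ties the final bounds to $M(\beta^0)$ and the complexity of $\mathcal{G}$.
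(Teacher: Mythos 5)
Your overall architecture --- basic inequality, dual/H\"{o}lder bound on the noise term, $\chi^2$ concentration through the Cholesky factors with a union bound over the $M$ groups, a cone condition, and then Assumption~\ref{asmREC} to close a quadratic inequality in $\|X\Delta\|$ --- is exactly the paper's route (the paper channels the noise bound through its Lemma~\ref{holderExt}, which is where the $\mathcal{G}_{\mbox{overlap}}^{1/2}$ factor in $\lambda$ enters; your direct group-by-group bound $\epsilon^TX\Delta\le\max_g\|X_g^T\epsilon\|\sum_g\|v_g^\Delta\|$ is the cleaner version and costs nothing, since the stated, larger $\lambda$ only makes the event more probable).

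The one place you diverge is the crux you flag in your last paragraph, and your proposed repair would not work as described. Lemma~\ref{lemmaForUniqueNess} compares two decompositions that \emph{both} minimize the norm of the \emph{same} vector; the decomposition $\{w_g-v_g^0\}$ of $\Delta$ produced by your cone argument is merely \emph{a} decomposition of $\Delta$, with no reason to be norm-minimizing, so the lemma gives you no handle on it and no constants with which to transfer the cone inequality onto $\{v_g^\Delta\}$. The paper avoids constructing $\{w_g-v_g^0\}$ at all: in Lemma~\ref{lemmaOracle} it asserts, citing only the triangle inequality for $\|\cdot\|_{2,1,\mathcal{G}}$, that $\|\widehat\beta-\beta\|_{2,1,\mathcal{G}}+\|\beta\|_{2,1,\mathcal{G}}-\|\widehat\beta\|_{2,1,\mathcal{G}}\le 2\sum_{g\in J}\|v_g^{\widehat\beta-\beta}\|$ with $\{v_g^{\widehat\beta-\beta}\}$ the \emph{minimizing} decomposition of the difference, so the cone condition lands directly on the decomposition that Assumption~\ref{asmREC} requires. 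Note, however, that the honest triangle-inequality computation yields the right-hand side $2\sum_{g\in J}\|w_g-v_g^0\|$, i.e.\ exactly your version; you have therefore correctly located a genuine subtlety that the paper's own proof passes over in one line rather than resolves. A fully rigorous argument would either strengthen Assumption~\ref{asmREC} to hold for every decomposition satisfying the cone condition (not only the norm-minimizing one), or supply the missing comparison between $\{w_g-v_g^0\}$ and $\{v_g^\Delta\}$; Lemma~\ref{lemmaForUniqueNess} will not supply it.
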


The proof for this result is given in the appendix~\ref{propositionProofOracle}.  The proof relies on Lemma~\ref{lemmaOracle} given in the appendix~\ref{lemmaFiniteSection}.  We now discuss the result.

\label{commentsFinite}
\begin{enumerate}
\item  {\em As the set of groups grows, the
    finite sample guarantees degrade}. In Proposition~\ref{finiteProp},
    the prediction and estimation bounds both get coarser as the
    number of groups increases. Note that the set of groups can grow not only as the
    dimension of the problem grows, but also if we encode complex
    structures over the predictors using $\mathcal{G}$. Thus, even for a
    problem of fixed dimension $p$, there is a consequence to
    choosing an arbitrarily complex set of groups. To make this result
    clear, let the groups be maximally complex: $\mathcal{G} =
    2^\mathcal{I}$, the power set of the set of predictors. Now,
    as the dimension of the problem grows, the prediction bound grows
    at rate $O(p^{3/2})$, and the estimation bound at rate $O(p)$. If $|\mathcal{G}|$ is instead of the same order as $p$ and the maximum group size is constant, these rates are instead both $O(\log p)$.  This shows that grouped sparsity achieves the tightest upper
    bounds if both
    the maximum group size and the number of groups grow at slower rates than $p$. Note the contrast here to the results of~\cite{Lounici_takingadvantage} in the multi-task setting, where a growing number of tasks benefitted the procedure. Note that in multi-task setting, the number of observations necessarily grows with the number of tasks, contrary to our setting.
\item {\em As the complexity of the groups grows,
    Assumption~\ref{asmREC} becomes more restrictive}. Since $\kappa$ appears in the denominator of both the prediction and estimation bounds, the bounds become less tight as $\kappa$ decreases.
  Consider the condition:
\begin{align}
\sum_{g \in J^c}||v_g^\Delta|| \leq 3 \sum_{g \in J}||v_g^\Delta||.
\end{align}
Recall that $J$ is a cardinality $s$ set of groups. Thus, for fixed $s$,
as the complexity of $\mathcal{G}$ grows, the flexibility of the decompositions grows, and then more vectors $\Delta$ satisfy this
condition.  This makes $\kappa$ decreasing as a function of
$|\mathcal{G}|$. We also recall that when $X^TX$ has a nonzero minimal absolute eigenvalue, we know $\kappa$ is at most $\sqrt{\rho_X/M\mathcal{G}_{\mbox{overlap}}}$. As noted earlier, as the complexity of the groups grows, $\mathcal{G}_{\mbox{overlap}}$ increases as well, leading to a smaller $\kappa$ and in turn inferior prediction and estimation bounds. If $\mathcal{G}_{\mbox{overlap}}$ is on the same order as the number of predictors, then $\kappa (s)$ is of order $1/M$ rather than $1/\sqrt{M}$. This dependence shows that our bounds depend equally on the dimension of the problem $M$ and the group complexity as measured by $\mathcal{G}_{\mbox{overlap}}$. In the case of the lasso or group lasso, $\mathcal{G}_{\mbox{overlap}} = 1$, giving us no dependence on group complexity, as expected.

\item {\em The results show that the procedure enjoys an advantage over non-structured procedures when $\beta^0$ is structured sparse}. For example, in the finite sample case, none of our bounds depended explicitly on the dimension of the problem $p$.  Thus, we can adopt a similar argument to those of~\cite{Lounici_takingadvantage} to show that compared to the lasso, the overlapping grouped lasso gives superior results in the case where $\beta^0$ is structured sparse.  That is, from~\cite{Bickel_simultaneousanalysis}, if we let:
\begin{align}
\lambda = A\sigma \sqrt{\frac{\log p}{n}},
\end{align}
then for $A > 2\sqrt{2}$, we have that with probability at least $1 - (p)^{1 - A^2/8}$:
\begin{align}
\frac{1}{n} || X (\widehat{\beta}_{\mbox{lasso}} - \beta^0 ) ||^2 \leq \frac{16A^2\sigma^2}{\kappa^2n} \log p.
\end{align}
Thus, if $ \sqrt{\max_g |g|} \log M + \max_g |g|$ is of smaller order than $\log p$, the procedure has a predictive advantage.  Since $\kappa$ depends on the structured sparsity of the target, this result holds only for structured sparse targets $\beta^0$ which give sufficiently large values of $\kappa$ under our assumption.

\item {\em In the non-overlapping case, we can recover many results
    available in the literature}. Here we have $\mathcal{G}_{\mbox{overlap}} = 1$. We adjust our assumption to
  match the literature, so
  that the quantity in the minimum is replaced with:
\begin{align}
\frac{\sqrt{\Delta^TX^TX\Delta}}{\sqrt{n} \left|\left|\sum_{g \in
      J}v_g^\Delta\right|\right|}.
\end{align}
Combining this with an application of the Cauchy-Schwarz inequality in the last steps of the proofs of the result,
we can recover the results of~\cite{Lounici_takingadvantage} in the multi-task case. In the case of the grouped lasso,
we can recover the result from~\cite{nardi-grouped-lasso}.  The dependence on the minimal eigenvalues of the Cholesky decomposition of each $\textbf{X}_g^T\textbf{X}_g$ is related to the conditions given in~\cite{zhang_benefits}. In the settings of~\cite{Lounici_takingadvantage}, $\rho_g = 1$ for all $g$.
\item {\em We can show a similar result solely in terms of
    $\max_g |g|$}.  In particular, for:
\begin{align}
\lambda = \frac{2\sigma\sqrt{\mathcal{G}_{\mbox{overlap}}}}{\sqrt{n}} \left( \max_g|g| + A\log M\right)^{1/2},
\end{align}
the same results hold with probability $1 - M^{1-q}$, for $q = \min_g(\rho_g^{-2}) \min
\left(A/8, \frac{8 \log M}{\max_g |g|} \right)$.  This result is a
consequence of a simple adjustment for this choice of $\lambda$ in the proof of
Lemma~\ref{lemmaOracle} from the appendix. This alternate result shows
that as the maximum group size grows, the estimation and prediction
bounds become less tight, and the probability that they hold falls. 
\item {\em The result does not depend on the any uniqueness
    assumptions on the decomposition of $\beta^0$}. The
  consistency result for the overlapping grouped lasso
  in~\cite{Laurent-graph-lasso} assumes that the decomposition of
  $\beta^0$ that minimizes the $||\cdot||_{2,1,\mathcal{G}}$ norm is
  unique. Our result, in contrast, depends only on the maximal
  structured sparsity of such decompositions. Thus, in the case where $\beta^0$ does not have a unique decomposition minimizing the $||\cdot||_{2,1,\mathcal{G}}$ norm, our results still hold. This is a contrast to the asymptotic results of the next section.
\end{enumerate}

%

\section{Asymptotic Results}
\label{asymSection}

In this section, we consider fixed dimension asymptotic for the
adaptive overlapping grouped lasso as described in Equation~\ref{adopt}.
These results extend those on the grouped lasso found
in~\cite{nardi-grouped-lasso} to the case of overlapping groups. 

To begin, define the set of indices of the true linear coefficient vector
$\beta^0$ which are nonzero and zero as the following:  
\begin{align}
H &=  \{i: \beta^0_i \neq 0\},\\
H^c &= \{i: \beta^0_i = 0\}.
\end{align}
Accordingly, we
define $\textbf{X}_H$ as the sub matrix containing the entries with
 column indices in the set $H$.  Similarly, for a $p$-vector $x$, let $x_H$ be the sub vector containing the entries with indices in the set $H$.  Clearly, $H \cup H^c =
\mathcal{I}$.  However, that
$H$ and $H^c$ are not necessarily the union of members of $J(\beta^0)$
and $J(\beta^0)^c$, respectively.  We next define the following three subsets of $\mathcal{G}$ related to $H$ and $H^c$: 
\begin{align}
G_H =& \{g: g \subseteq H\},\\
G_{Hc} =& \{g: g \subseteq H^c\},\\
G_{Ho} =& \{g: |g \cap H| >0; |g \cap H^c| > 0 \}.
\end{align}
These are, respectively, the set of groups in which the indices are all
nonzero in $\beta^0$, all zero in $\beta^0$, and a mix of zero and
nonzero in $\beta^0$.

For this setting, we now make the following assumptions:
\begin{asm}
\label{A:FIXED}
As $n \to \infty$, $\textbf{X}^T\textbf{X} \to \mathcal{M}$, where $\mathcal{M}$ is positive definite.
\end{asm}
\begin{asm}
\label{B:FIXED}
 The entries of the stochastic term $\boldsymbol{\epsilon}$ in
 Equation~\ref{mod} are $\mbox{i.i.d.}$ with finite second moment $\sigma^2$.
\end{asm}
\begin{asm}
\label{correct}
There exists a neighborhood in $\mathbb{R}^p$ around $\beta^0$ such that the decomposition of any vector $b$ in the neighborhood has a unique decomposition $\{v_g^b\}$ minimizing the norm $||b||_{2,1,\mathcal{G}}$.  In particular, the decomposition $\{ v_g^0\}$, minimizing the norm
$||\beta^0||_{2,1,\mathcal{G}}$ is unique. Further, this decomposition is such that $v_g^0 = \textbf{0}$ for all $g \in G_{Ho}$.
\end{asm}

Assumptions~\ref{A:FIXED} and~\ref{B:FIXED} are directly taken from
the grouped lasso setting. Assumption~\ref{correct} is another such
condition adapted to our setting. A direct adaptation would be that 
{\em there exists some $G\subseteq \mathcal{G}$, such that $\cup_{g\in G} g =
\mbox{supp}(\beta^0)$}. This property is
implied by Assumption~\ref{correct}. Note that these three assumptions
are analogous to those needed for the consistency result given
in~\cite{Laurent-graph-lasso}. This assumption also addresses indirectly the issue of identifiability of the groups. For example, for $M=3$, and $\mathcal{G} = \{\{1, 2\}, \{2, 3\}, \{1, 3\}\}$, the target $\beta^0 = (a, a, a)$ does not admit a unique, norm minimizing decomposition within any neighborhood. Similarly, we can create the set $\{1, 2, 3\}$ in four possible ways from unions of members of $\mathcal{G}$. Thus, this particular $\mathcal{G}$ does not satisfy Assumption~\ref{correct} for some targets. 

In the following result, we consider the adaptive overlapping grouped lasso of Equation~\ref{adopt}. We now propose a set of weights $\{\lambda_g\}$ for the adaptive overlapping grouped
lasso.  If we let $\beta^{OLS} =
(\textbf{X}^T\textbf{X})^{-1}\textbf{X}^T\textbf{y}$, and let
$\{v_g^{OLS} \} = \mathcal{V}(\beta^{OLS})$ be any decomposition
minimizing the norm $||\beta^{OLS}||_{2,1,\mathcal{G}}$. Then, let $\lambda_g =
1/||v^{OLS}_g{||}$. This choice of weights gives us our main result:

\begin{thm}
\label{asymProp}
Consider the adaptive overlapping grouped lasso. Suppose Assumptions~\ref{A:FIXED},~\ref{B:FIXED}, and~\ref{correct}
hold.  Let
$\beta^{OLS} = (\textbf{X}^T\textbf{X})^{-1}\textbf{X}^T\textbf{y}$, and let
$\{v_g^{OLS} \} = \mathcal{V}(\beta^{OLS})$ be any decomposition
minimizing the norm $||\beta^{OLS}||_{2,1,\mathcal{G}}$.  Then, let $\lambda_g =
\frac{1}{||v^{OLS}_g{||^\gamma}}$, for $\gamma>0$ such that
$n^{(\gamma + 1)/2}\lambda \to \infty$.  If $\sqrt{n}\lambda \to 0$, then, as $n \to \infty$:
\begin{align}
\sqrt{n}(\widehat{{\beta}} - \beta^0) \to Z.
\end{align}
Where the above is convergence in distribution.  The vector $Z$ has
entries: 
\begin{align}
Z_H &\sim N_{|H|} (0, \sigma^2\mathcal{M}_{H}^{-1}),\\
 Z_{H^c} &= \textbf{0}.
\end{align}
Where $\mathcal{M}_{H}$ is the sub-matrix of $\mathcal{M}$ consisting of the entries with row and column indices in $H$.
\end{thm}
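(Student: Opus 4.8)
The plan is to follow the classical Knight–Fu / Zou strategy for adaptive penalties, recast for the overlapping grouped norm. First I would recenter and rescale: write $\beta = \beta^0 + u/\sqrt{n}$ and define the objective
\begin{align}
\Psi_n(u) = \left\| \epsilon - \tfrac{1}{\sqrt{n}}\textbf{X}u \right\|^2 + 2\lambda \min_{\mathcal{V}(\beta^0 + u/\sqrt{n})} \sum_{g \in \mathcal{G}} \lambda_g \| v_g \|,
\end{align}
so that $\widehat{u} = \sqrt{n}(\widehat{\beta}-\beta^0)$ minimizes $\Psi_n(u) - \Psi_n(0)$. The quadratic (data) part is standard: expanding, its limit in distribution is $-2 u^T W + u^T \mathcal{M} u$ for a Gaussian $W \sim N(0,\sigma^2\mathcal{M})$, using Assumption~\ref{A:FIXED} for the $\textbf{X}^T\textbf{X}/n \to \mathcal{M}$ convergence and Assumption~\ref{B:FIXED} with a CLT for $\textbf{X}^T\epsilon/\sqrt{n}$. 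The whole argument then reduces to identifying the limit of the rescaled penalty difference and invoking convexity (the epi-convergence / pointwise-limit-of-convex-functions argument) so that the argmins converge.

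The heart of the matter is the penalty term, and here Assumption~\ref{correct} does the essential work. I would split the penalty increment group by group according to the partition $G_H, G_{Hc}, G_{Ho}$. For $g \in G_H$ the corresponding $v_g^0$ is nonzero, so $\lambda_g \| v_g\| - \lambda_g \| v_g^0\|$ scaled by $\sqrt{n}$ behaves like a bounded directional-derivative term whose contribution vanishes because $\sqrt{n}\lambda \to 0$ forces the weight on the already-active groups to be asymptotically negligible (the $\beta^{OLS}$-based weights $\lambda_g = \|v_g^{OLS}\|^{-\gamma}$ converge to positive constants on $G_H$ by consistency of OLS and uniqueness of the decomposition in a neighborhood of $\beta^0$, which Assumption~\ref{correct} guarantees). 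For groups touching $H^c$, any $u$ with a nonzero component on $H^c$ must load mass onto groups in $G_{Hc} \cup G_{Ho}$; on these groups $v_g^0 = \textbf{0}$ (using the second clause of Assumption~\ref{correct} for $G_{Ho}$, and by definition for $G_{Hc}$), so $\|v_g^{OLS}\| = O_P(n^{-1/2})$, hence $\lambda_g = \|v_g^{OLS}\|^{-\gamma}$ blows up at rate $n^{\gamma/2}$, and the scaled penalty contribution is of order $n^{(\gamma+1)/2}\lambda \cdot \|u_g\| \to \infty$ whenever that component is nonzero. This is precisely the asymmetry that produces the $Z_{H^c} = \textbf{0}$ conclusion: the limiting objective is $+\infty$ off the subspace $\{u : u_{H^c} = \textbf{0}\}$ and equals $-2u_H^T W_H + u_H^T \mathcal{M}_H u_H$ on it.

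The main obstacle, and the step I would treat most carefully, is controlling the minimizing decomposition $\min_{\mathcal{V}(\cdot)}$ as $\beta$ varies with $n$, since the norm is defined through an inner minimization that is not smooth and whose active set can change. The key lemma I need is that Assumption~\ref{correct}'s local-uniqueness hypothesis makes the map $b \mapsto \{v_g^b\}$ well-defined and continuous on a neighborhood of $\beta^0$, so that for $u$ fixed and $n$ large the decomposition of $\beta^0 + u/\sqrt{n}$ is a small perturbation of $\{v_g^0\}$ and inherits its support structure; in particular no group in $G_{Ho}$ becomes active under small perturbations that keep $u_{H^c}=\textbf{0}$. Once this continuity is in hand, the group-by-group limit computation above is justified, the epi-convergence of the convex objectives $\Psi_n - \Psi_n(0)$ to the limit functional follows, and by the argmin-continuous-mapping theorem for convex functions (Geyer/Knight–Fu) the minimizers converge in distribution: $\widehat{u}_H \to \tfrac{1}{2}\mathcal{M}_H^{-1}\cdot 2 W_H = \mathcal{M}_H^{-1} W_H \sim N(0,\sigma^2\mathcal{M}_H^{-1})$ and $\widehat{u}_{H^c}\to\textbf{0}$, which is exactly the stated $Z$.
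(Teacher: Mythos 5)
Your proposal follows essentially the same route as the paper's proof: the Knight--Fu reparametrization $\beta = \beta^0 + u/\sqrt{n}$, the group-by-group analysis of the penalty increment over the partition $G_H$, $G_{Hc}$, $G_{Ho}$, the use of Assumption~\ref{correct} to get convergent weights on $G_H$ and exploding weights of order $n^{\gamma/2}$ on $G_{Hc}\cup G_{Ho}$, and the same limiting functional that is quadratic on $\{u : u_{H^c}=\textbf{0}\}$ and $+\infty$ elsewhere. The continuity-of-the-decomposition issue you rightly flag as the delicate step is exactly what the paper isolates in Lemma~\ref{oplemma}, and your closing appeal to epi-convergence of convex functions is a minor (if anything slightly more careful) variant of the paper's use of the argmax theorem.
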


We now make some comments on the result.
\begin{enumerate}
\item {\em In the non-overlapping case, our result reduces
    to previous results from~\cite{nardi-grouped-lasso}.}  In particular, the weights are clearly $\lambda_g = 1 /
||\beta^{OLS}_g ||^\gamma$. 
Given this, we could ask what is the consequence of  simply choosing $\lambda_g = 1 /
||\beta^{OLS}_g ||^\gamma$ for the adaptive weights in any case?  In the
proof of the result, the impact is for the case when $g \in G_{Ho}$.
In summary, the term  $n^{\gamma/2}||\beta^{OLS}_g||^\gamma$ is no longer
$O_p(1)$, since $||\beta^0_g|| >0$.  Then, we get the following
distribution:
\begin{align}
Z_{Ho} &\sim N_{|Ho|} (0, \sigma^2\mathcal{M}_{Ho}^{-1}),\\
 Z_{Ho^c} &= \textbf{0}.
\end{align}
 The resulting distribution is nonzero
with positive probability in coordinates that are zero in
$\beta^0$. In this situation, the problem can be remedied by assuming
that $G_{Ho}$ is empty, that is: 
\begin{asm}
\label{A:SEP}
(Separation of support) $\exists G \subset \mathcal{G}$ such that $\cup_{g \in G} g = H$ and
$\cup_{g \notin G} g = H^c$.
\end{asm}
For many settings with overlap, this is an overly restrictive
assumption.  Note that this assumption corresponds to assuming the groups are
correct in the non-overlapping grouped lasso.  If the groups are incorrect, the
result of this proposition gives us some insight as to what goes wrong
asymptotically.

\item {\em The result gives a consequence of having an ``incorrect''
    set of groups, relative to the support of $\beta^0$.} When the
  condition $\forall \ g \in G_{Ho} \ ; v_g^0 = \textbf{0}$ of
  Assumption~\ref{correct} is violated,   we have that
  $n^{\gamma/2}||\beta^{OLS}_g||^\gamma$ is no longer   $O_p(1)$ for
  $g \in G_{Ho}$, and the consequence is similar to the   previous
  remark.  Again, we get the wrong asymptotic mean, and the
  estimator does not have good selection properties.  Such a violation
  Assumption~\ref{correct} implies that the structure implied by
  $\mathcal{G}$ is not sufficient to capture the structure in
  $\beta^0$.  

\item {\em These results exclude some types of structures: in
    particular nested groups in $\mathcal{G}$}. In particular, the
  uniqueness assumption implies that we can not use a $\mathcal{G}$ which contains nested groups. In
  this case, given
  a set of groups, the uniqueness condition of
  Assumption~\ref{correct} are violated for some $\beta^0$.  For
  example, suppose $p = 5$ and  
\begin{align}
\mathcal{G} = \{ \{1,2\},
  \{3,4\}, \{1,2,3,4\}, \{5\}\}.
\end{align}
Then, for $\beta^0 = [a,a,0,0,c]$,
  then there are an infinite number of decompositions minimizing the
  $||\cdot||_{2,1,\mathcal{G}}$.  In particular, for any $\alpha \in (0,a)$,
   the following decomposition minimizes the norm:
\begin{align}
v_1^0 &= [a-\alpha,a-\alpha,0,0,0],\\
v_2^0 &= [0,0,0,0,0],\\
v_3^0 &= [\alpha,\alpha,0,0,0],\\
v_4^0 &= [0,0,0,0,c].
\end{align}
Then, consider the weights $\lambda_g =
  ||v_G^{OLS}||$.   In almost all data
  applications we have: $\mbox{supp}(\beta^{OLS}) \supset
  \{1,2,3,4\}$.  The minimizing decomposition of
  $||\beta^{OLS}||_{2,1,\mathcal{G}}$ will clearly have $v^{OLS}_{\{1,2\}} =
  v^{OLS}_{\{3,4\}} = 0$.  This effectively excludes the first two
  groups, and we will be unable to detect all possible sparsity
  patterns. More generally, using the same argument as the example, we
  can state that in the case where groups are nested, there exist some
  $\beta^0$ which cannot be uniquely   decomposed to minimize the
  $||\cdot||_{2,1,\mathcal{G}}$ norm.  Thus, using nested groups degrades the asymptotic guarantees of the
  overlapping grouped lasso. This property precludes using a complex nested
  set of groups to encode multiple structures.
\end{enumerate}

\section{Simulation Study}

We now present the results of a simulation study to illuminate and support our earlier theoretical claims. For ease of comparison, we imitate the setting of~\cite{zhang_benefits}. Here, we explore issues most pertinent to the overlapping groups lasso, leaving aside some of the issues addressed by the simulation study in~\cite{zhang_benefits}. We generate an $n \times p$ design matrix \textbf{X} with $\mbox{i.i.d.}$ standard normal entries, with each row scaled so it has unit magnitude. We next generate a structured sparse $\beta^0$ vector with the nonzero entries defined as the union of the first $k$  groups from our set of groups $\mathcal{G}$. We choose the first $k$ groups to achieve a consistent amount of overlap in $\beta^0$ with respect to $\mathcal{G}$ between trials. We define $k$, $\mathcal{G}$, $n$, and $p$ separately in each experiment. After constructing our response from $\textbf{X}$ and $\beta^0$, we add zero mean Gaussian noise with standard deviation $\sigma = 0.01$. We compare the standard lasso against the overlapping groups lasso, with set of groups $\mathcal{G}$. As in~\cite{zhang_benefits}, we adopt the following metric to evaluate the performance of both estimators:
\begin{align}
\mbox{Recovery Error}: \frac{||\beta^0 - \widehat{\beta}||_2}{||\beta^0||_2}
\end{align}
We conduct the following pair of experiments:

\begin{enumerate}
\item {\em Study on the effect of overlap}. Here, we simulate a problem that has nearly constant difficulty for the ordinary, un-grouped, lasso, but increasing difficulty for the grouped lasso.  We set $p=512$, and  set each group so that it consists of $8$ consecutive (by index) predictors. We then vary $\mathcal{G}_{\mbox{Overlap}} \in \{1,2,\ldots,8\}$. For example, with $\mathcal{G}_{\mbox{Overlap}} = 1$, our first two groups are $g_1 = \{1,2,\ldots,8\}; g_2 = \{9,10,\ldots,16\}$, and with with $\mathcal{G}_{\mbox{Overlap}} = 2$, $g_1 = \{1,2,\ldots,8\}; g_2 = \{8,9,\ldots,15\}$, and so forth. We select $k = \mbox{ceiling}((64 - 8)/(8 + \mathcal{G}_{\mbox{Overlap}})) + 1$ groups to be nonzero in $\beta^0$, and set $n = 192$.

\item {\em Study on the effect of sample size}. We adopt a similar setting of the first experiment. We set $\mathcal{G}_{\mbox{Overlap}} = 4$, and set $\mathcal{G}, p, k$ in a similar manner as the first experiment. We consider $n$ satisfying $\log_2(n/48) \in \{0, 1, 2, 3, 4\}$. 

\end{enumerate}

The purpose of the first experiment is to study the effect of increasing complexity of $\mathcal{G}$ on estimation performance. For $\mathcal{G} \in \{1, 2, 3, 4\}$, we see that as the degree of overlap increases, the estimator performance degrades, though not dramatically in these settings. For $\mathcal{G} = 5$, with groups of size $8$, we can see that due to the consecutive placement of the signal, about half of the groups may be dropped without degradation in performance, and we return to the setting and performance of $\mathcal{G} = 1$. For $\mathcal{G} \in \{6, 7, 8\}$, the estimator again does worse than in the case of no overlap, but no worse than $\mathcal{G} = 4$. This result supports the discussion surrounding Assumption~\ref{asmREC} and Theorem~\ref{finiteProp}, but still indicates that the procedure is more robust to overlap than postulated in~\citet{zhang_benefits}.

In the sample size study, we see that for a reasonable $(\mathcal{G}_{\mbox{overlap}} = 4)$ set of groups, the estimator outperforms the lasso: it is able to achieve a limiting level of recovery error for lower sample sizes than the lasso. This supports the conclusions of Theorem~\ref{finiteProp}, as well as the conclusions from the literature about the grouped lasso, e.g.~\citet{zhang_benefits} and~\cite{Lounici_takingadvantage}. We thus see that even in the overlap case, the procedure still enjoys a benefit due to group sparsity.

\begin{figure}
\centering 
\includegraphics[width=4.5in]{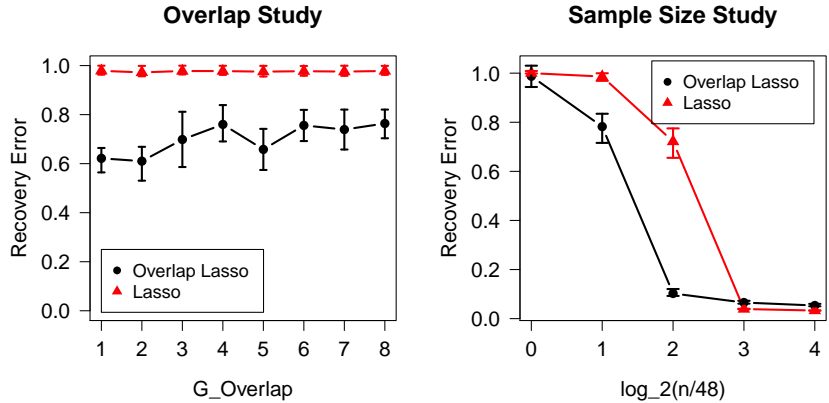}
\caption{Results of the simulation study. We compare the overlapping grouped lasso to the ordinary lasso. Left: study on the degree of overlap of the groups. Right: study on sample size.}
\label{ogl-sim} 
\end{figure}

\section{Discussion and Conclusions}

In the previous two sections, we have given results on the performance of the overlapping grouped lasso in both the finite sample and asymptotic setting. One of the basic steps in practical applications of this procedure is the choice of the collection of groups $\mathcal{G}$. In both cases, we showed that an overly complex choice of $\mathcal{G}$ degrades the theoretical guarantees on the performance of the estimator.  In the case where the dimension of the problem is fixed, increasing the number of groups leads to less tight upper bounds on both prediction and estimation in the finite sample case.  In the asymptotic setting, nested groups lead to inconsistent selection of the true sparsity pattern.   Nonetheless, when $\mathcal{G}$ is suitably chosen, we still see that the procedure retains the theoretical benefits of the grouped lasso demonstrated in previous literature.

In summary, we find that the overlapping grouped lasso is a useful extension of the grouped lasso that must be used with caution.  The flexibility allowed by overlapping groups is valuable in many applications, and can encode a wide variety of structures as collections of groups. We have shown that allowing for overlap does not remove many of the theoretical properties and benefits proven for the lasso and grouped lasso.  However, the procedure must be used with caution. While the flexible nature of the procedure suggests that the analyst may encode many structures simultaneously, this approach is not supported by the results in this paper.

\appendix

\section{Proofs}
\subsection{Finite Sample Result}
\subsubsection{Auxillary Lemmas}
\begin{lem}
\label{chiSquared}
Let $\chi^2_D$ be a chi-squared random variable with $D$ degrees of
freedom.  Then:
\begin{align}
\mathbb{P}(\chi^2_D > D + x) \leq \exp \left( -\frac{1}{8} \min
  \left\{ x,\frac{x^2}{D} \right\}\right).
\end{align}
\end{lem}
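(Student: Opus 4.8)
The plan is to use the Chernoff (exponential Markov) method together with the explicit moment generating function of a chi-squared variable. Writing $\chi^2_D = \sum_{i=1}^D Z_i^2$ with $Z_i$ i.i.d.\ standard normal, recall that $\mathbb{E}[e^{t\chi^2_D}] = (1-2t)^{-D/2}$ for $0 < t < 1/2$. Applying Markov's inequality to the nonnegative variable $e^{t\chi^2_D}$ gives, for every such $t$,
\begin{align}
\mathbb{P}(\chi^2_D > D + x) \leq e^{-t(D+x)}(1-2t)^{-D/2}.
\end{align}

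First I would optimize the exponent over $t$. Minimizing $-t(D+x) - \frac{D}{2}\log(1-2t)$ yields the stationary point $t^\ast = \frac{x}{2(D+x)} \in (0,1/2)$, at which $1 - 2t^\ast = D/(D+x)$. Substituting back collapses the bound to
\begin{align}
\mathbb{P}(\chi^2_D > D + x) \leq \exp\left(-\tfrac{1}{2}\left[x - D\log(1 + x/D)\right]\right).
\end{align}

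It then remains to compare this exponent with the claimed one. Setting $u = x/D \geq 0$, both $x - D\log(1+x/D) = D[u - \log(1+u)]$ and $\min\{x, x^2/D\} = D\min\{u, u^2\}$ scale by $D$, so the lemma reduces to the scalar inequality
\begin{align}
u - \log(1+u) \geq \tfrac14 \min\{u, u^2\}, \qquad u \geq 0.
\end{align}
I would prove this by splitting at $u = 1$. For $0 \leq u \leq 1$ the right side is $u^2/4$; the function $h(u) = u - \log(1+u) - u^2/4$ satisfies $h(0) = 0$ and $h'(u) = \frac{u(1-u)}{2(1+u)} \geq 0$, so $h \geq 0$ on this range. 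For $u \geq 1$ the right side is $u/4$; the function $k(u) = \tfrac34 u - \log(1+u)$ has $k'(u) = \tfrac34 - \frac{1}{1+u} > 0$ and $k(1) = \tfrac34 - \log 2 > 0$, so $k \geq 0$. Combining the two regimes yields the inequality, and hence the lemma.

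The main obstacle is this final scalar inequality: the $\min$ on the right-hand side forces a two-regime analysis, and the comparison is tight enough near the crossover $u = 1$ that a crude Taylor bound for $\log(1+u)$ does not suffice across all of $[0,1]$. The monotonicity argument via $h$ and $k$ is what makes both regimes go through cleanly; everything else --- the MGF formula and the one-variable optimization --- is routine.
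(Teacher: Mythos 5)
Your proof is correct and complete. The paper itself does not prove this lemma --- it simply cites Lemma A.1 of Lounici et al., which in turn rests on the same standard Chernoff-bound computation you carry out. Your route is the canonical one: the moment generating function $(1-2t)^{-D/2}$, optimization at $t^\ast = x/(2(D+x))$ giving the exponent $-\tfrac12\bigl[x - D\log(1+x/D)\bigr]$, and then the scalar reduction $u - \log(1+u) \geq \tfrac14\min\{u,u^2\}$ for $u = x/D \geq 0$. I checked the two-regime argument: $h'(u) = \frac{u(1-u)}{2(1+u)}$ is indeed nonnegative on $[0,1]$ with $h(0)=0$, and $k(1) = \tfrac34 - \log 2 > 0$ with $k' > 0$ on $[1,\infty)$, so both regimes close. (For $x \leq 0$ the claimed bound is trivially at least $1$, so nothing is lost by restricting to $x \geq 0$.) What your writeup buys over the paper's treatment is self-containedness: the reader need not chase the reference, and the constant $\tfrac18$ is seen to arise explicitly from the factor $\tfrac12$ in the optimized exponent combined with the factor $\tfrac14$ in the scalar inequality. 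You correctly identify that the only delicate point is the tightness of the comparison near $u=1$, which is exactly why the monotonicity argument rather than a crude Taylor expansion is needed.
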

\begin{proof}
See Lemma A.1 from~\cite{Lounici_takingadvantage}.
\end{proof}

\begin{lem}
\label{holderExt}
Let $\alpha, \beta \in \mathbb{R}^p$, then, $\forall \mathcal{G}$:
\begin{align}
\alpha^T\beta \leq \mathcal{G}_{\mbox{overlap}}^{1/2}\left( \max_g||\alpha_g||_2  \right) ||\beta||_{2,1,\mathcal{G}}.
\end{align}
\end{lem}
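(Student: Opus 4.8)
The statement is the pairing (H\"older-type) inequality between the overlap norm $||\cdot||_{2,1,\mathcal{G}}$ and its dual functional $\max_{g}||\alpha_g||$. The plan is to pass to a decomposition of $\beta$ that attains the minimum in the definition~\ref{overlapNorm}. Concretely, I would fix a collection $\{v_g\}$ with $\mbox{supp}(v_g)\subseteq g$, $\sum_{g\in\mathcal{G}} v_g = \beta$, and $\sum_{g\in\mathcal{G}}||v_g|| = ||\beta||_{2,1,\mathcal{G}}$. Such a minimizer exists because $||\cdot||_{2,1,\mathcal{G}}$ is a genuine norm and the minimum defining it is attained (cf.~\cite{Laurent-graph-lasso}); equivalently, the feasible set of decompositions is a closed affine set on which the continuous objective $\sum_g ||v_g||$ is coercive.

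First I would expand $\alpha^T\beta = \alpha^T\sum_{g}v_g = \sum_{g\in\mathcal{G}} \alpha^T v_g$. Since each $v_g$ is supported on $g$, only the coordinates of $\alpha$ indexed by $g$ contribute, so $\alpha^T v_g = \alpha_g^T (v_g)_g$, and the Cauchy--Schwarz inequality gives $\alpha^T v_g \le ||\alpha_g||\,||v_g||$ for every $g$. Bounding $||\alpha_g|| \le \max_{h\in\mathcal{G}}||\alpha_h||$ uniformly in $g$ and summing yields $\alpha^T\beta \le \left(\max_{h}||\alpha_h||\right)\sum_{g}||v_g|| = \left(\max_{h}||\alpha_h||\right)||\beta||_{2,1,\mathcal{G}}$. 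Because $\mathcal{G}_{\mbox{overlap}}\ge 1$, the claimed inequality follows at once.

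The argument has no serious obstacle; the only points requiring care are (i) justifying that one may work with a norm-\emph{minimizing} decomposition rather than an arbitrary one, which is precisely what lets $\sum_g||v_g||$ equal the norm instead of merely dominating it, and (ii) observing that the support condition $\mbox{supp}(v_g)\subseteq g$ is what collapses $\alpha^T v_g$ to the sub-vector pairing $\alpha_g^T(v_g)_g$, enabling the per-group Cauchy--Schwarz step. It is worth noting that this route in fact establishes the sharper bound with constant $1$ in place of $\mathcal{G}_{\mbox{overlap}}^{1/2}$; the weaker factor is harmless and is retained only to match the form in which the lemma is applied in the proof of Theorem~\ref{finiteProp}.
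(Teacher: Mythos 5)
Your proof is correct, and it takes a genuinely different route from the paper's. You decompose $\beta$ via a norm-minimizing collection $\{v_g\}$ and apply Cauchy--Schwarz \emph{group by group}, using $\mbox{supp}(v_g)\subseteq g$ to reduce each pairing $\alpha^T v_g$ to $\alpha_g^T (v_g)_g \le \|\alpha_g\|\,\|v_g\|$, then sum. The paper instead starts from a single global H\"older-type bound $\alpha^T\beta \le \|\alpha\|_\infty\|\beta\|_2$ and then relates the global norms back to the group-wise quantities $\max_g\|\alpha_g\|$ and $\sum_g\|v_g^{\beta*}\|$, which is where the factor $\mathcal{G}_{\mbox{overlap}}^{1/2}$ enters (that chain is in fact somewhat loosely written: the first displayed inequality is not a valid instance of H\"older's inequality, and the regrouping step $\|\sum_g \alpha_g\|_\infty$ silently multiplies each coordinate by the number of groups containing it). Your per-group argument avoids all of this and, as you observe, yields the sharper constant $1$ in place of $\mathcal{G}_{\mbox{overlap}}^{1/2}$; since $\mathcal{G}_{\mbox{overlap}}\ge 1$ (every index lies in at least one group), the stated inequality follows. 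Your two points of care are the right ones: attainment of the minimum in the definition of $\|\cdot\|_{2,1,\mathcal{G}}$ (continuity plus coercivity of $\sum_g\|v_g\|$ on the closed affine set of decompositions), and the support condition that collapses $\alpha^Tv_g$ to the sub-vector pairing. Note also that in the one place the lemma is used (the bound on $\frac{2}{n}\epsilon^TX(\widehat{\beta}-\beta)$ in Lemma~\ref{lemmaOracle}), your sharper constant would only tighten the downstream bounds, so retaining the paper's weaker form is indeed harmless.
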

\begin{proof}
Let $\{v^{\beta *}_g\}$ denote any decomposition of $\alpha$ minimizing the norm $||\beta||_{2,1,\mathcal{G}}$. Then, beginning with Holder's inequality the following chain gives the desired result:
\begin{align}
\alpha^T\beta &\leq || \alpha ||_\infty ||\beta||_2\\
&\leq \left|\left| \sum_g \alpha_g \right|\right|_\infty \left|\left| \sum_g v^{\beta *}_g\right|\right|_2\\
&\leq   \mathcal{G}_{\mbox{overlap}}^{1/2}\left( \max_g ||\alpha_g||_2\right) \sum_g  ||v^{\beta *}_g||_2\\
&= \mathcal{G}_{\mbox{overlap}}^{1/2}\left( \max_g||\alpha_g||_2  \right) ||\beta||_{2,1,\mathcal{G}}.
\end{align}
\end{proof}

\label{lemmaFiniteSection}
\begin{lem}
\label{lemmaOracle}
Consider the model in Equation~\ref{mod}.  Suppose $|\mathcal{G}| = M
\geq 2$, and $n \geq 1$.  Assume that the entries of $\epsilon$ are
$\mbox{i.i.d.}$ Gaussian with mean 0 and variance $\sigma^2$.  Let $\textbf{X}$
be normalized so that the the diagonal entries of
$\textbf{X}^T\textbf{X}/n$ are all equal to 1. 
Let $\{v_g^{\widehat{\beta}-\beta}\}$ denote a decomposition of $\widehat{\beta} - \widehat{\beta}$ minimizing the $||\cdot||_{2,1,\mathcal{G}}$ norm.  
Let $J = J(\beta^0) =
\{g : v_g^0 \neq 0\}$ be the set of groups that are
nonzero in the norm minimizing decomposition of $\beta$. Let:
\begin{align}
\lambda = \frac{2\sigma \sqrt{\max_g|g|}}{\sqrt{n}} \left( 1 + \frac{A\log M}{\sqrt{\max_g|g|}}\right)^{1/2}
\end{align}
Here, $A>8$.  Define $q = \min \left(A\sqrt{\min_g |g|}/8,8 \log M \right)$.  Then, with probability at least
$1-M^{1-q}$, for any solution $\widehat{\beta}$ to Equation~\ref{opt}, for
all $\beta \in \mathbb{R}^p$, the following inequality holds:
\begin{align}
\label{l1res1}
\frac{1}{n}||\textbf{X}(\widehat{\beta} - \beta^0)||^2 +
\lambda ||\widehat{\beta} - \beta||_{2,1,\mathcal{G}} &\leq
\frac{1}{n}||\textbf{X}({\beta} - \beta^0)||^2 + 4\lambda\sum_{g \in
  J} ||v_g^{\widehat{\beta}-\beta}||.
\end{align}
\end{lem}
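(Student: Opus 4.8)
The plan is to follow the standard basic-inequality argument for lasso-type oracle inequalities, adapted to the overlapping grouped norm. First I would write down the defining optimality inequality: since $\widehat{\beta}$ minimizes the objective in Equation~\ref{opt}, for any competitor $\beta$ we have $\ell(\widehat{\beta}) + 2\lambda\|\widehat{\beta}\|_{2,1,\mathcal{G}} \leq \ell(\beta) + 2\lambda\|\beta\|_{2,1,\mathcal{G}}$. Expanding $\ell(\widehat{\beta}) = \frac{1}{n}\|\mathbf{y} - \mathbf{X}\widehat{\beta}\|^2$ and using $\mathbf{y} = \mathbf{X}\beta^0 + \epsilon$, I would rearrange to isolate the prediction-error term $\frac{1}{n}\|\mathbf{X}(\widehat{\beta}-\beta^0)\|^2$ on the left, producing the cross term $\frac{2}{n}\epsilon^T\mathbf{X}(\widehat{\beta}-\beta)$ on the right alongside the norm difference $2\lambda(\|\beta\|_{2,1,\mathcal{G}} - \|\widehat{\beta}\|_{2,1,\mathcal{G}})$.

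The key step is controlling the stochastic cross term $\frac{2}{n}\epsilon^T\mathbf{X}(\widehat{\beta}-\beta)$. Here I would invoke Lemma~\ref{holderExt} with $\alpha = \frac{1}{n}\mathbf{X}^T\epsilon$ and $\beta$-slot equal to $\widehat{\beta}-\beta$, which bounds this term by $\mathcal{G}_{\mbox{overlap}}^{1/2}(\max_g \|(\mathbf{X}^T\epsilon/n)_g\|_2)\,\|\widehat{\beta}-\beta\|_{2,1,\mathcal{G}}$. This reduces everything to a high-probability bound on the random scalar $\max_g \|(\mathbf{X}^T\epsilon)_g\|_2$. Since $\epsilon$ is Gaussian, for each fixed $g$ the quantity $\|(\mathbf{X}^T\epsilon)_g\|_2^2$ (after accounting for the Cholesky factor with eigenvalue $\rho_g$) is $\sigma^2$ times a chi-squared variable with $|g|$ degrees of freedom; I would apply the chi-squared tail bound of Lemma~\ref{chiSquared} with an appropriate choice of $x$, then union-bound over the $M$ groups. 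The definition of $\lambda$ and of $q$ is engineered precisely so that on this event $\frac{2}{n}\epsilon^T\mathbf{X}(\widehat{\beta}-\beta) \leq \frac{\lambda}{2}\|\widehat{\beta}-\beta\|_{2,1,\mathcal{G}}$, and the $M^{1-q}$ failure probability emerges from the union bound against the $\exp$-decay in Lemma~\ref{chiSquared}.

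The final algebraic step is to convert the norm difference into a sum over $J$ only. Using the triangle inequality on decompositions, $\|\beta\|_{2,1,\mathcal{G}} - \|\widehat{\beta}\|_{2,1,\mathcal{G}}$ can be bounded via the decomposition $\{v_g^{\widehat{\beta}-\beta}\}$ so that only the groups in $J = J(\beta^0)$ contribute a positive term, while groups in $J^c$ contribute a term that can be absorbed into the left-hand side; this is where the factor $4\lambda\sum_{g\in J}\|v_g^{\widehat{\beta}-\beta}\|$ arises and why the $J^c$ portion of the decomposition norm moves to reinforce the $\lambda\|\widehat{\beta}-\beta\|_{2,1,\mathcal{G}}$ term on the left. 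I expect the main obstacle to be the bookkeeping in this last step: one must handle the non-uniqueness of the minimizing decomposition carefully (Lemma~\ref{lemmaForUniqueNess} guarantees directional agreement but not uniqueness of supports), and ensure that the splitting over $J$ versus $J^c$ respects the constraint that $\{v_g^{\widehat{\beta}-\beta}\}$ is itself a norm-minimizing decomposition of the difference. Collecting the surviving $\frac{\lambda}{2}\|\widehat{\beta}-\beta\|_{2,1,\mathcal{G}}$ terms and multiplying through yields the stated inequality with the constant $4\lambda$.
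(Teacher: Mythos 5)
Your proposal follows essentially the same route as the paper's proof: the basic optimality inequality, the extended H\"older bound of Lemma~\ref{holderExt} applied to the cross term $\frac{2}{n}\epsilon^T\mathbf{X}(\widehat{\beta}-\beta)$, the reduction via the Cholesky factor $\rho_g$ to a $\chi^2_{|g|}$ tail bound from Lemma~\ref{chiSquared} with a union bound over the $M$ groups yielding failure probability $M^{1-q}$, and the final triangle-inequality bookkeeping that converts $\|\widehat{\beta}-\beta\|_{2,1,\mathcal{G}}+\|\beta\|_{2,1,\mathcal{G}}-\|\widehat{\beta}\|_{2,1,\mathcal{G}}$ into $2\sum_{g\in J}\|v_g^{\widehat{\beta}-\beta}\|$ and hence the factor $4\lambda$. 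The only discrepancy is a harmless constant-tracking detail (the paper's event $\mathcal{A}$ yields $\lambda\|\widehat{\beta}-\beta\|_{2,1,\mathcal{G}}$ for the cross term rather than your $\tfrac{\lambda}{2}\|\widehat{\beta}-\beta\|_{2,1,\mathcal{G}}$), which does not affect the argument's structure.
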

\begin{proof}
We follow the proof strategy of~\cite{Lounici_takingadvantage}. For all $\beta \in \mathbb{R}^p$, we have:
\begin{align}
\frac{1}{n}||X \widehat{\beta} - y||^2 + 2\lambda ||\widehat{\beta}||_{2,1,\mathcal{G}} 
\leq \frac{1}{n}||X {\beta} - y||^2 + 2\lambda ||\beta||_{2,1,\mathcal{G}}
\end{align}
Let $y = X\beta^0 + \epsilon$ to obtain:
\begin{align}
\label{lem1step1}
\frac{1}{n}||X (\widehat{\beta}-\beta^0)||^2 \leq \frac{1}{n}||X
({\beta}-\beta^0)||^2 + \frac{2}{n}\epsilon^TX(\widehat{\beta} - \beta) +
2\lambda \left( ||\beta||_{2,1,\mathcal{G}} - ||\widehat{\beta}||_{2,1,\mathcal{G}}\right)
\end{align}
We now examine the second term on the right hand side:
\begin{align}
\frac{2}{n}\epsilon^TX(\widehat{\beta} - \beta) 
&\leq  \frac{2\mathcal{G}_{\mbox{overlap}}^{1/2}}{n}\left(  \max_g ||\epsilon^TX_g|| \right) ||\widehat{\beta} - \beta||_{2,1,\mathcal{G}}\\
&= \frac{2\mathcal{G}_{\mbox{overlap}}^{1/2}}{n}\left(  \max_g \sqrt{\sum_{j \in g} \left( \sum_{i=1}^n
    X_{ij}\epsilon_i \right)^2 } \right)  ||\widehat{\beta} - \beta||_{2,1,\mathcal{G}}.
\end{align}
Here, we apply our version of H\"{o}lder's inequality (Lemma~\ref{holderExt}).  We now consider the event:
\begin{align}
\mathcal{A} = \left\{  \frac{1}{n} \left(  \max_g \sqrt{\sum_{j \in g}
      \left( \sum_{i=1}^n X_{ij}\epsilon_i \right)^2 } \right) \leq \frac{\lambda}{2\mathcal{G}_{\mbox{overlap}}^{1/2}} \right\}.
\end{align}
Note that random variables $V_{g(j)} = \frac{1}{\sigma \sqrt{n}}
\sum_{i=1}^n X_{ij}\epsilon_i$, where $g(j)$ denotes the $jth$ element
of $g \in \mathcal{G}$, are standard Gaussian random
variables. Within a group, they have a multivariate normal distribution with covariance matrix $\textbf{X}_g^T\textbf{X}_g / (\sigma^2 n)$, where $\textbf{X}_g$ denotes the sub matrix of $\textbf{X}$ consisting of the columns indexed by the group $\textbf{X}_g$. It then follows that, provided $\textbf{X}_g$ admits a Cholesky decomposition, that $(\textbf{X}_g^T\textbf{X}_g)^{-1/2}\textbf{X}_g\boldsymbol{\epsilon} / \sigma^2 n$ is a vector of i.i.d. standard normal random variables. Thus, letting $\rho_g$ denote the maximal absolute eigenvalue of $(\textbf{X}_g^T\textbf{X}_g)^{-1/2}$, we have $||\textbf{X}_g\boldsymbol{\epsilon}/\sigma^2n|| \leq \rho_g ||(\textbf{X}_g^T\textbf{X}_g)^{-1/2}\textbf{X}_g\boldsymbol{\epsilon} / \sigma^2 n||$ by properties of the operator norm of $(\textbf{X}_g^T\textbf{X}_g)^{1/2}$. Now, for any $g \in \mathcal{G}$ define:
\begin{align}
\gamma_g =  \frac{2\sigma\sqrt{|g|\mathcal{G}_{\mbox{overlap}}}}{\sqrt{n}} \left( 1 + \frac{A\log M}{\sqrt{|g|}}\right)^{1/2}.
\end{align}
Note, $\forall g \in \mathcal{G}; \ \gamma_g \leq \lambda$.  Now:
\begin{align}
\mathbb{P}\left( \sum_{j \in g}
      \left( \sum_{i=1}^n X_{ij}\epsilon_i \right)^2 \geq
      \frac{\lambda^2n^2}{4\mathcal{G}_{\mbox{overlap}}}  \right) &\leq \mathbb{P}\left(
      \rho_g^2 \chi^2_{|g|} \geq \frac{\lambda^2n}{4\sigma^2\mathcal{G}_{\mbox{overlap}}}\right)\\
 &\leq \mathbb{P}\left(
      \chi^2_{|g|} \geq \frac{\gamma_g^2n}{4\sigma^2\rho_g^2\mathcal{G}_{\mbox{overlap}}}\right)\\
&= \mathbb{P}\left( \chi^2_{|g|}
 \geq \rho_g^{-2}(|g| + A\sqrt{|g|}\log M)\right)\\
&\leq \exp \left( -\frac{\rho_g^{-2}A \log M}{8} \min\left\{\sqrt{|g|}, A \log
      M\right\} \right)\\
&\leq \exp \left( -\frac{\min_g[\rho_g^{-2}]A \log M}{8} \min\left\{ \left(\min_g \sqrt{|g|}\right), A \log
      M \right\} \right)
\end{align}
In the above, we used Lemma~\ref{chiSquared} for the probability bound on
$\chi^2$ variables. We now apply the union bound to obtain:
\begin{align}
\mathbb{P}(\mathcal{A}^c) &\leq M \exp \left( -\frac{\min_g[\rho_g^{-2}] A \log M}{8}
  \min\left\{ \left(\min_g \sqrt{|g|}\right), A \log
      M \right\} \right)\\
&\leq M^{1-q}
\end{align}
Now, on the event $\mathcal{A}$, we can obtain, from
Equation~\ref{lem1step1}:
\begin{align}
\frac{1}{n}||X (\widehat{\beta}-\beta^0)||^2  &+ \lambda ||\widehat{\beta} - \beta||_{2,1,\mathcal{G}} \leq\\
 \frac{1}{n}||X({\beta}-\beta^0)||^2 &+ 2\lambda \left(||\widehat{\beta} - \beta||_{2,1,\mathcal{G}} + ||\beta||_{2,1,\mathcal{G}} - ||\widehat{\beta}||_{2,1,\mathcal{G}}\right)\\
&\leq \frac{1}{n}||X ({\beta}-\beta^0)||^2 + 4\lambda \sum_{g \in
  J} ||v_g^{\widehat{\beta}-\beta}||
\end{align}
Where $\{v_g^{\widehat{\beta}-\beta}\}$ denotes a decomposition of $\widehat{\beta} - \widehat{\beta}$ minimizing the $||\cdot||_{2,1,\mathcal{G}}$ norm. Note that the last line follows from the fact that $||\cdot||_{2,1,\mathcal{G}}$ obeys the triangle inequality. This gives us the desired result in Equation~\ref{l1res1}.  
\end{proof}

\subsubsection{Proof of Theorem~\ref{finiteProp}}
\label{propositionProofOracle}
Again, we follow the proof strategy of~\cite{Lounici_takingadvantage}.
Fix a decomposition of $\beta^0$: $\{v_g^0\}$.  Let $J = J(\beta^0) =
\{g : v_g^0 \neq 0\}$.  Let the event $\mathcal{A}$ in
Lemma~\ref{lemmaOracle} hold and let $\beta = \beta^0$ in the
inequality~\ref{l1res1}:
\begin{align}
\lambda||\widehat{\beta} - \beta^0||_{2,1,\mathcal{G}} &\leq
4\lambda\sum_{g \in
  J} || v^{\widehat{\beta}-\beta^0}||,\label{startOfTwo}\\
\implies \sum_{g \in  J^c} || v^{\widehat{\beta}-\beta^0} || &\leq  3\sum_{g \in
  J} || v^{\widehat{\beta}-\beta^0} ||.
\end{align}
Thus, we can apply Assumption~\ref{asmREC} with $\Delta = (\widehat{\beta}
- \beta^0), \mathcal{V}(\Delta) = \{v^{\widehat{\beta}-\beta^0} \}$ to obtain:
\begin{align}
\sum_{g \in J}\left|\left|v^{\widehat{\beta}-\beta^0}\right|\right| \leq \frac{||X(\widehat{\beta} - \beta^0)||}{\kappa \sqrt{n}}
\end{align}
Again, when the event $\mathcal{A}$ in
Lemma~\ref{lemmaOracle} hold and for $\beta = \beta^0$ in the
inequality~\ref{l1res1}:
\begin{align}
\frac{1}{n}||\textbf{X}(\widehat{\beta} - \beta^0)||  &\leq 
4\lambda  \sum_{g \in J}|| v^{\widehat{\beta}-\beta^0}||\\
&\leq \frac{4\lambda}{\kappa \sqrt{n}}||X(\widehat{\beta} -
\beta^0)||\\
\implies \frac{1}{n^2}||\textbf{X}(\widehat{\beta} - \beta^0)||^2 &\leq
\frac{16\lambda^2 }{\kappa^2 n}\\
\implies \frac{1}{n}||\textbf{X}(\widehat{\beta} - \beta^0)||^2 &\leq
\frac{64\sigma^2  }{\kappa^2 n} \left(\max_g|g|
 + A\log M \right)
\end{align}
This corresponds to the result in Equation~\ref{p1res1}. Equation~\ref{p1res2} follows from an analogous chain as the above, beginning with the inequality~\ref{startOfTwo}.


\subsection{Asymptotic Setting}

Before we prove the main result, we give the following lemma.

\begin{lem}
\label{oplemma}
Let Assumption~\ref{correct} hold.  For $g \in G_{Ho}$ $n^{\gamma/2}(||v_g^{OLS}||)^\gamma$ is $O_p(1)$ for $\gamma >0$.
\end{lem}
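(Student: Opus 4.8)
The plan is to combine the $\sqrt{n}$-consistency of the ordinary least squares estimator with a local regularity (Lipschitz) property of the norm-minimizing decomposition map, and then exploit the fact that Assumption~\ref{correct} forces $v_g^0 = \textbf{0}$ for every $g \in G_{Ho}$. Since $n^{\gamma/2}(||v_g^{OLS}||)^\gamma = (\sqrt{n}\,||v_g^{OLS}||)^\gamma$ and $\gamma>0$ is fixed, it suffices to show $||v_g^{OLS}|| = O_p(n^{-1/2})$ for each $g \in G_{Ho}$; the claimed $O_p(1)$ bound then follows because $x \mapsto x^\gamma$ preserves $O_p(1)$ on $[0,\infty)$.

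First I would record the behaviour of $\beta^{OLS}$. Under Assumptions~\ref{A:FIXED} and~\ref{B:FIXED}, standard least squares theory gives $\sqrt{n}(\beta^{OLS} - \beta^0) \to N_p(0,\sigma^2\mathcal{M}^{-1})$ in distribution, so that $||\beta^{OLS} - \beta^0|| = O_p(n^{-1/2})$ and, in particular, $\beta^{OLS} \to \beta^0$ in probability. Consequently $\beta^{OLS}$ lies in the neighbourhood of $\beta^0$ furnished by Assumption~\ref{correct} with probability tending to one; on this event the decomposition $\{v_g^{OLS}\}$ is the unique minimiser of $||\beta^{OLS}||_{2,1,\mathcal{G}}$, and likewise $\{v_g^0\}$ is the unique minimiser of $||\beta^0||_{2,1,\mathcal{G}}$.

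The key step, which I expect to be the main obstacle, is to show that the map $\Phi$ sending a vector $b$ to its locally unique norm-minimizing decomposition $\{v_g^b\}$ is locally Lipschitz at $\beta^0$: there exist a constant $L<\infty$ and a neighbourhood of $\beta^0$ on which $\max_{g}||v_g^b - v_g^0|| \leq L\,||b - \beta^0||$. To establish this I would pass to the lifted form of the decomposition problem, namely minimizing the block norm $\sum_g ||\tilde v_g||$ over the affine set $\{\tilde v : A\tilde v = b\}$, where $A$ is the linear operator summing the support-restricted blocks into $\mathbb{R}^p$. This is a second-order cone program parametrized by $b$, whose minimiser is single-valued throughout a neighbourhood of $\beta^0$ by Assumption~\ref{correct}; the feasible correspondence $b \mapsto A^{-1}(b)$ is (Hausdorff-)Lipschitz on bounded regions since $A$ is a surjective linear map, so by Berge's maximum theorem the solution correspondence is upper hemicontinuous, and being single-valued it is continuous, while the conic structure together with uniqueness (strong regularity of the associated KKT system) upgrades continuity to local Lipschitz continuity. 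I emphasize that this sensitivity step is essential: continuity alone only yields $||v_g^{OLS}|| = o_p(1)$, whereas the $O_p(n^{-1/2})$ rate needed for the conclusion requires the Lipschitz bound, so that the decomposition inherits the convergence rate of its argument.

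Finally I would compose the two ingredients. On the event, of probability tending to one, that $\beta^{OLS}$ lies in the Lipschitz neighbourhood, for every $g \in G_{Ho}$ we have $v_g^0 = \textbf{0}$ by Assumption~\ref{correct}, whence
\begin{align}
||v_g^{OLS}|| = ||v_g^{OLS} - v_g^0|| \leq L\,||\beta^{OLS} - \beta^0|| = O_p(n^{-1/2}).
\end{align}
Therefore $\sqrt{n}\,||v_g^{OLS}|| = O_p(1)$, and raising to the power $\gamma>0$ gives $n^{\gamma/2}(||v_g^{OLS}||)^\gamma = O_p(1)$, as claimed.
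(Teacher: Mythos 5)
Your overall strategy coincides with the paper's: both arguments reduce the claim to showing that $\|v_g^{OLS}\|$ inherits the $O_p(n^{-1/2})$ rate of $\beta^{OLS}-\beta^0$, using Assumption~\ref{correct} to guarantee that $v_g^0=\mathbf{0}$ for every $g\in G_{Ho}$ and that the relevant norm-minimizing decompositions are unique in a neighbourhood of $\beta^0$. The paper's own proof is terser: it establishes only that $\|v_g^{\delta_n}\|\to 0$ in probability and then asserts $\sqrt{n}\,\|v_g^{\delta_n}\|=O_p(1)$ without supplying the rate-transfer step. You correctly observe that continuity of the decomposition map alone yields only $o_p(1)$, and that some quantitative stability of the map $b\mapsto\{v_g^b\}$ is what the lemma really requires; on that point your write-up is more explicit than the paper's.

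The weak link is your justification of the local Lipschitz property. You assert that "the conic structure together with uniqueness (strong regularity of the associated KKT system)" upgrades continuity to Lipschitz continuity, but uniqueness of the minimizer throughout a neighbourhood does not imply strong regularity of the KKT system of a second-order cone program. In particular, for a group $g\in G_{Ho}$ with $v_g^0=\mathbf{0}$, if the dual certificate of the lifted decomposition problem at $\beta^0$ is tight on block $g$ (the corresponding dual block has unit norm) rather than strictly feasible, the solution map can fail to be Lipschitz in that block even though the minimizer is unique; closing this requires an additional nondegeneracy or strict-complementarity hypothesis, or a direct argument exploiting the specific structure of the constraint $\sum_g v_g = b$. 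So there is a genuine gap at precisely the step you flag as the main obstacle --- though it is the same step the paper's proof skips entirely, so your account is, if anything, a more honest description of what remains to be proved.
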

\begin{proof}
By Assumption~\ref{correct}, we may denote $\{v_g^{0} \} =
\mathcal{V}(\beta^{0})$ as the unique decomposition minimizing the norm
$||\beta^{0}||_{2,1,\mathcal{G}}$. 
To make the dependence on $n$ explicit, we denote $\beta^{OLS}_n$ as
the ordinary least squares estimate for $\beta^0$ using $n$ data
points. We know $\beta^{OLS}_n \to \beta^o$ in probability,
as $n \to \infty$. By Assumption~\ref{correct}, there exists an $N$ such that, with high probability, $\beta^{OLS}_n$ has a unique decomposition for all $n \geq N$. We denote this unique decomposition as: $\{v_g^{OLS,n} \} =
\mathcal{V}(\beta^{OLS}_n)$, minimizing
$||\beta^{OLS}_n||_{2,1,\mathcal{G}}$.

We next write $\beta^{OLS}_n = \beta^0 + \delta_n$, and then define the decomposition $v_g^{\delta_n} = v_g^0 - v_g^{OLS_n}$. Recall that for $g \in G_{H_o}$, we have $||v_g^0|| = 0$ and  furthermore $||\beta^{OLS}_n||_{2,1,\mathcal{G}} \to ||\beta^{0}||_{2,1,\mathcal{G}}$ in probability. Thus, considering the terms in $||\beta^{OLS}_n||_{2,1,\mathcal{G}}$ corresponding to those $g \in G_{H_o}$ we conclude $||v_g^{\delta_n}|| \to 0$ in probability as $n \to \infty$ for $g \in G_{H_o}$.   

Finally, for $g \in G_{H_o}$: $\sqrt{n}(||v_g^{OLS}||) = \sqrt{n}(||v_g^{OLS}|| - ||v_g^0||) = \sqrt{n}(||v_g^{\delta_n}|| - 0) \in O_p(1)$. The result then follows for $\gamma > 0$ by the continuous mapping theorem.
\end{proof}

\subsubsection{Proof of Theorem~\ref{asymProp}}

We follow the general proof strategy of Theorem 3.2
from~\cite{nardi-grouped-lasso}, which is adapted from similar results
on the lasso from~\cite{fuKnightLasso} and~\cite{Zou_2006}.
First, define $\beta_n = \beta^0 + \frac{u}{\sqrt{n}}$.  Let
$\{v_g^0\} = \mathcal{V}(\beta^0); \{v_g^n\} = \mathcal{V}(\beta^n)$
be decompositions of $\beta^0$ minimizing
$||\beta^0||_{2,1,\mathcal{G}}$, and $||\beta_n||_{2,1,\mathcal{G}}$,
respectively.  Therefore, the following is a decomposition of $u$:
$\forall \ g \in \mathcal{G}$, $v_g^u
= \sqrt{n}(v_g^n - v_g^0)$.

To begin, we write the objective from Equation~\ref{adopt} (multiplied by $\frac{n}{2}$)
as:
$$
Q_n(u) = \frac{1}{2} \left| \left| \frac{1}{\sqrt{n}} X u + \epsilon
  \right| \right|^2 + \sum_g n \lambda \lambda_n \left|\left| v^0_g +
\frac{1}{\sqrt{n}}v_g^u\right|\right|
$$
Let:
\begin{align*}
D_n(u) &= Q_n(u) - Q_n(0)\\
&= \left( \frac{1}{2n} u^T X^TX u - \frac{1}{\sqrt{n}} u^T X
  \epsilon\right)\\
& \ + \sqrt{n} \lambda \sum_g \lambda_g \sqrt{n} \left( \left|\left| v^0_g +
  \frac{1}{\sqrt{n}} v^u_g \right|\right|  - || v_g^0||\right)\\
&= I_{1,n} + \sum_g I_{2,n,g}
\end{align*}
We now proceed to examine the terms in the second summation.  The
behavior of these terms depends on the group $g$:
\begin{itemize}
\item For $g \in G_H$, we have $\lambda_n \to 1 / ||v_g^o ||_2^\gamma$ in
probability, by the uniqueness of the decomposition $\{v_g^0\}$ along with Assumption~\ref{correct}.  Also:
$$
\sqrt{n} \left( \left|\left|v^0_g +
  \frac{1}{\sqrt{n}} v^u_g \right|\right|  - || v_g^0||\right)
\to \frac{(v^u_g)^Tv^0_g}{||v_g^0||}.
$$
Since $\sqrt{n}\lambda = o(1)$, then the term $I_{2,n,g} \to 0$.
\item For $g \in G_{Hc}$, $n^{\gamma/2}||v^{OLS}_g||^\gamma =
O_p(1)$ and:
$$
\sqrt{n} \left( \left|\left|v^0_g +
  \frac{1}{\sqrt{n}} v^u_g \right|\right|  - || v_g^0||\right)
= ||v^u_g||.
$$
Since, $n^{(\gamma+1)/2}\lambda \to \infty$, then $I_{2,n,g} \to
\infty$.
\item For $g \in G_{Ho}$, and
  $n^{\gamma/2}||v^{OLS}_g||^\gamma_2$ is $O_p(1)$ by Lemma~\ref{oplemma}.
  As before,
$$
\sqrt{n} \left( \left|\left|v^0_g +
  \frac{1}{\sqrt{n}} v^u_g \right|\right|  - || v_g^0||\right)
= ||v^u_g||.
$$
So $I_{2,n,g} \to \infty$.
\end{itemize}

Now, $I_{1,n} \to \frac{1}{2}u^T\mathcal{M}u - u^TW$, where $W \sim
  N_p(0,\sigma^2\mathcal{M})$.  Since $p$ is fixed and finite, then it follows
  that $D_n(u) \to D(u)$, where:
\begin{align*}
D(u) = \left\{
     \begin{array}{lr}
       \frac{1}{2}u^T\mathcal{M}u - u^TW  & \mbox{if }\forall g \notin G_{H}: v^u_{g}= 0\\
       \infty  & \mbox{else}
     \end{array}
   \right.
\end{align*}
Now, $u = (\mathcal{M}_{H}^{-1}W,0)^T$ minimizes $D(u)$ and so by
the argmax theorem from~\cite[Corollary 3.2.3]{vandervaartwellner}, the result follows.

\bibliography{refs}
\bibliographystyle{imsart-nameyear}

\end{document}